\documentclass{article}

\usepackage{hyperref}

\usepackage{amsmath,amsfonts,bm}

\def\eqref#1{equation~\ref{#1}}

\def\1{\bm{1}}

\def\vzero{{\bm{0}}}

\def\vbeta{{\bm{\beta}}}

\def\ve{{\bm{e}}}

\def\vk{{\bm{k}}}

\def\vm{{\bm{m}}}

\def\vq{{\bm{q}}}

\def\vu{{\bm{u}}}
\def\vv{{\bm{v}}}
\def\vw{{\bm{w}}}
\def\vx{{\bm{x}}}

\def\vz{{\bm{z}}}

\def\mH{{\bm{H}}}
\def\mI{{\bm{I}}}

\def\mK{{\bm{K}}}

\def\mP{{\bm{P}}}
\def\mQ{{\bm{Q}}}

\def\mU{{\bm{U}}}

\def\mW{{\bm{W}}}
\def\mX{{\bm{X}}}

\def\mSigma{{\bm{\Sigma}}}

\DeclareMathAlphabet{\mathsfit}{\encodingdefault}{\sfdefault}{m}{sl}
\SetMathAlphabet{\mathsfit}{bold}{\encodingdefault}{\sfdefault}{bx}{n}

\def\gL{{\mathcal{L}}}
\def\gM{{\mathcal{M}}}

\def\gS{{\mathcal{S}}}

\newcommand{\E}{\mathbb{E}}

\newcommand{\R}{\mathbb{R}}

\DeclareMathOperator{\Tr}{Tr}

 \usepackage[preprint]{conf/icml2026}

\usepackage{microtype}
\usepackage{graphicx}
\usepackage{subcaption}
\usepackage{booktabs}

\usepackage{amsmath}
\usepackage{amssymb}
\usepackage{mathtools}
\usepackage{amsthm}
\usepackage{thmtools} 
\usepackage[capitalize,noabbrev]{cleveref}
\usepackage{algorithm}
\usepackage{algorithmic}
\usepackage{enumitem}

\usepackage{xspace}
\usepackage{multirow}
\usepackage{appendix}

\usepackage{xfp} 
\usepackage{tcolorbox}
\tcbuselibrary{listings, breakable} 

\newif\ifshowcomments  

\theoremstyle{plain}
\newtheorem{theorem}{Theorem}[section]

\newtheorem{lemma}[theorem]{Lemma}

\theoremstyle{definition}
\newtheorem{definition}[theorem]{Definition}

\newtheorem{assumption}[theorem]{Assumption}
\theoremstyle{remark}

\DeclareMathOperator{\ent}{\mathbf{Entropy}}

\makeatletter
\patchcmd{\l@section}
  {\addvspace{1.0em \@plus\p@}}
  {}
  {}{}
\makeatother

\newcommand{\percentimpr}[2]{\fpeval{round(100*(#2-#1)/#1)}}
\newcommand{\percentdecr}[2]{\fpeval{round(100*(-#2+#1)/#1)}}

\showcommentstrue  

\newcommand{\papertitle}{Data Distribution as a Lever for Guiding Optimizers Toward Superior Generalization in LLMs}
\icmltitlerunning{\papertitle}

\begin{document}

\twocolumn[
  \icmltitle{\papertitle}

    \icmlsetsymbol{equal}{*}
    
    \begin{icmlauthorlist}
        \icmlauthor{Tushaar~Gangavarapu}{equal,utaustin}
        \icmlauthor{Jiping~Li}{equal,ucla}
        \icmlauthor{Christopher~Vattheuer}{equal,ucla}
        \icmlauthor{Zhangyang~Wang}{utaustin}
        \icmlauthor{Baharan~Mirzasoleiman}{ucla}
    \end{icmlauthorlist}
    
    \icmlaffiliation{ucla}{University of California, Los Angeles}
    \icmlaffiliation{utaustin}{University of Texas at Austin}

    \icmlcorrespondingauthor{Jiping~Li}{jipingli0324@g.ucla.edu}
    \icmlkeywords{Sharpness-Aware Minimization, Generalization, Simplicity bias, Data-centric ML}

    \vskip 0.3in
]

\printAffiliationsAndNotice{\icmlEqualContribution}


\addtocontents{toc}{\protect\setcounter{tocdepth}{-10}}

\begin{abstract}
    Can modifying the training data distribution guide optimizers toward solutions with improved generalization when training large language models (LLMs)? In this work, we theoretically analyze an in-context linear regression model with multi-head linear self-attention, and compare the training dynamics of two gradient based optimizers, namely gradient descent (GD) and sharpness-aware minimization (SAM), the latter exhibiting superior generalization properties but is prohibitively expensive for training even medium-sized LLMs. 
We show, for the first time, that SAM induces a lower simplicity bias (SB)—the tendency of an optimizer to preferentially learn simpler features earlier in training—and identify this reduction as a key factor underlying its improved generalization performance. Motivated by this insight, we demonstrate that altering the training data distribution by upsampling or augmenting examples learned later in training similarly reduces SB and leads to improved generalization. 
Our extensive experiments show that our strategy improves the performance of multiple LLMs—including Phi2-2.7B , Llama3.2-1B, Gemma3-1B-PT, and Qwen3-0.6B-Base—achieving relative accuracy gains up to $18\%$ when fine-tuned with AdamW and Muon on mathematical reasoning tasks.

\end{abstract}


\section{Introduction}
\label{sec:intro}

Training machine learning models with different optimizers yields solutions with different generalization properties. 
Recently, 
f-SAM, a functional variant of Sharpness-Aware Minimization (SAM), is shown to significantly enhance the generalization capabilities of large language models (LLMs) 
\citep{singh2025avoiding}. 
However, f-SAM is very expensive requiring over 6x time and 2x memory compared to AdamW, thus becomes prohibitively expensive for training even medium sized LLMs.
This raises the following questions: \textit{what derives SAM to find more generalizable solutions? Can we change the training data distribution to make training dynamics of cheaper gradient-based methods, such as Adam-W \citep{loshchilov2017decoupled} and Muon \cite{muon}, more similar to SAM and improve the generalization performance? }

Answering this question requires theoretically analyzing training dynamics of SAM for transformers. 
In convolutional models, SAM’s behavior can often be interpreted through the lens of the Hessian spectrum \citep{wen2022does,bartlett2023dynamics}, gradient noise scale \citep{chen2023does}, and local geometry of the loss landscape \cite{andriushchenko2022towards}, where improvements are linked to flatter minima and enhanced robustness. However, in transformers, the autoregressive training objective, non-local dependencies induced by self-attention, and the highly anisotropic gradient structure create a fundamentally different optimization regime. 
Consequently, 
extending these insights to language models requires new analytical tools that account for the unique training dynamics of transformers.

In this work, we study the gradient descent dynamics of multi-head linear self-attention trained for in-context linear regression, which is a popular framework for studying transformers \citep{zhang2025training,huangtransformers,javanmard2025understandingroletrainingdata}.
In this setting, 
the loss exhibits 
multiple abrupt drops 
each corresponding to learning a feature (a direction of the data covariance matrix) in the data \citep{zhang2025training}. 
We show that 
SAM induces drops in the training loss at a more uniform speed. That is, SAM slows down learning the simplest features and speeds up learning more complex ones, indicating its reduced Simplicity Bias (SB)   \citep{kalimeris2019sgd,hu2020surprising}. 
To the best of our knowledge, we are the first to reveal lower simplicity bias of SAM on language-based models.

Motivated by our theoretical analysis, we investigate ways to reduce the simplicity bias of training Large Language Models (LLMs) when training with cheaper gradient methods, such as Adam-W \cite{loshchilov2017decoupled} and Muon \cite{muon}. To achieve this, we aim to speed up learning more complex features, by identifying and upsampling or augmenting examples that only contain complex features. Such examples are learned later during training. 
To find such examples, we train a smaller proxy LLM for a small number of iterations and 
track the loss, forming a loss trajectory for all examples. 
Examples with simple features that are learned quickly have a reducing loss trajectory early in training.
To reduce SB, we cluster the loss trajectories (or their final values) into two groups and upsample or augment the group of examples with the higher average loss. Our experiments show that training on the upsampled data significantly boosts the performance of various LLMs, including Qwen3-0.6B-Base \citep{yang2025qwen3}, Llama3.2-1B \citep{dubey2024llama}, Gemma3-1B-PT \citep{team2025gemma}, and Phi2-2.7B \citep{javaheripi2023phi} fine-tuned on MathInstruct \citep{yue2023mammoth} with AdamW and Muon optimizers, achieving relative accuracy gains up to $18\%$.
\section{Related work} 

\textbf{Theory of Transformers.}
There has been a surge of theoretical studies investigating various properties of transformers in the in-context learning paradigm. This includes convergence \citep{zhang2024context,zhang2024trained,ren2024learning,fu2024transformers,zhang2025training}, generalization \citep{wu2023many,mahankali2023one,duraisamy2024finite,lu2025asymptotic,abedsoltan2024context,frei2024trained}, expressivity \citep{vladymyrov2024linear,gatmiry2024role}, 
and loss landscape properties \citep{mahankali2023one, li2024fine}. 
Most relevant to us is the recent study of \cite{zhang2025training} which studied transformers with multi-head attention. 
By reducing the high-dimensional training dynamics to scalar ordinary differential equations through an ansatz, they found 
exponentially many fixed points in the training dynamics, each corresponding to learning a direction in the data covariance matrix. 

In our work, we analyze the difference in training dynamics of SAM and GD 
and show that 
SAM learns difference directions of the data covariance matrix more homogeneously during the training, indicating its lower simplicity bias. Our work is the first to reveal lower simplicity bias of SAM as a key factor contributing to its superior generalization performance on transformer models.

\textbf{Simplicity Bias.} 
Gradient methods has an inductive bias towards learning simpler solutions. Simplicity bias has long been conjectured to be the reason for the superior generalization performance of overparameterized fully-connected and convolutional models by providing implicit regularization \citep{belkin2019reconciling,gunasekar2017implicit,hu2020surprising,nakkiran2021deep}. 
However, if gradient-based optimizers have a similar inductive bias for 
language models trained with self-supervised techniques remained an open question.
The very recent empirical study of \citep{rende2024distributional} provided the first empirical evidence by demonstrating that transformers sequentially learn 
low-degree interactions followed by high-degree interactions. However, this phenomenon has remained theoretically unexplored.

In our work, we provide the first theoretical study confirming that SAM has a lower simplicity bias compared to GD when training transformers, and identify this as a key contributor to its superior generalization performance.
\section{Preliminaries}
\label{sec:prelim}


%
%




To understand the evolving dynamics of gradient-based training, we consider the widely adopted setting of a multi-head linear self-attention model for the in-context linear regression task \citep{garg2022can,akyurek2022learning,von2023transformers,ahn2023transformers,bai2023transformers,zhang2025training}.
Although linear self-attention, i.e., self-attention \textit{without} the softmax, may seem oversimplified, \citet{loshchilov2017decoupled,ahn2023transformers} show that it is particularly well suited for studying the optimization properties of transformers.

\textbf{Data Model.} 
Let $\vx_i\in\R^d$ denote the covariates drawn from $\mathcal{N}(\bm{0},\mSigma),$ and $\vw_\star\in\R^d$ from $\mathcal{N}(\bm{0}, \mI_d).$ The ground-truth responses are computed as $y_i=\vw_\star^\top\vx_i \in \R$. Now we model in-context next-token prediction with input sequence $\{\vx_1, y_1, \vx_2, y_2, \dots, \vx_{N}, y_N, \vx_q\}$, where $\vx_q$ is the \textit{query} token whose response is to be estimated. We define: 
\begin{equation}\label{eq:data}
    \mX = \begin{bmatrix}
     \vx_1&  \vx_2& \dots&  \vx_N & \vx_q \\
      y_1& y_2&  \dots&  y_N & 0 
    \end{bmatrix} \in \mathbb{R}^{(d+1)\times(N + 1)},
\end{equation}
where the bottom-right entry is set to zero to indicate the unknown response $y_q$ associated with $\vx_q$. A model $f:\mathbb{R}^{(d+1) \times (N + 1)} \to \mathbb{R}^{(d+1) \times (N + 1)}$ predicts $y_q$ by extracting $\hat{y}_q := f(\mX)_{d+1, N+1}$ i.e., the bottom right entry below $\vx_q$. 


\textbf{Multi-Head Linear Self-attention.} We consider the linear self-attention model with $H$ heads and a residual connection.
For $\mX \in \mathbb{R}^{(d+1) \times (N + 1)},$ we define: 
\[
    \texttt{ATTN}_S(\mX)   = \mX + \sum_{i=1}^H\frac{1}{N}\mW_i^V\mX\mX^\top{\mW_i^{K}}^\top\mW_i^{Q}\mX, 
\]
where $\mW_i^V\in \R^{(d+1) \times (d+1)},\mW_i^K,\mW_i^Q \in \R^{r \times (d+1)}$ are the trainable value, key, and query matrices for head $i.$ In practice, the model is trained with a dataset of $P$ $\{$token sequence, query$\}$ pairs $\{\mX_i, y_{q,i}\}_{i=1}^P$, where each data point can have independently sampled weight vector $\vw_{\star, i} \sim \mathcal{N}(\bm{0}, \mI_d)$.

\begin{figure*}[t]
    \centering
    \includegraphics[width=0.9\textwidth]{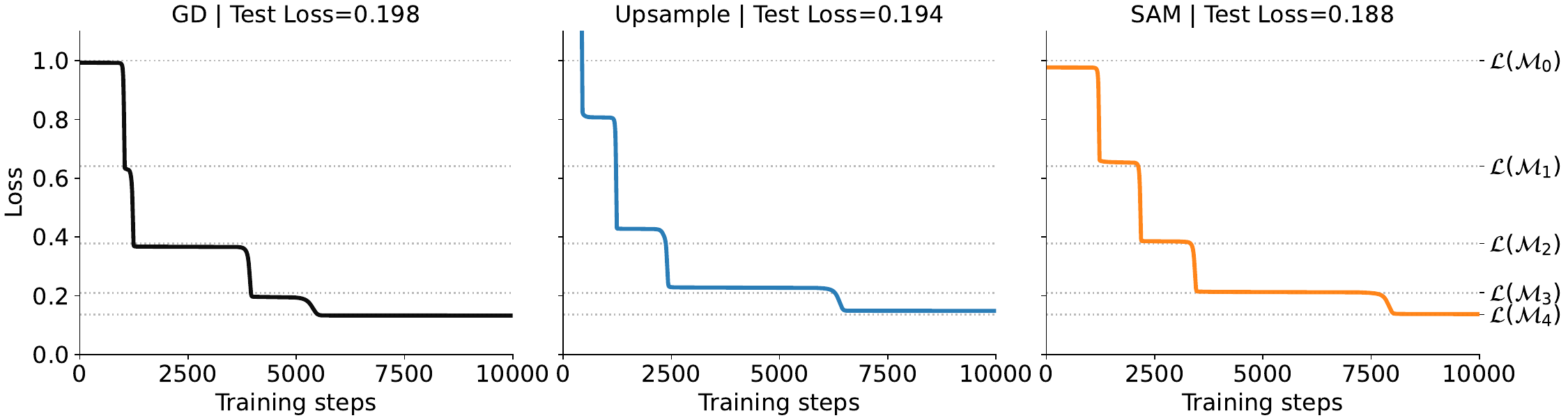}
    \caption{Comparison of training loss for (left) GD, (right) SAM, and (middle) GD + upsampling (our method). We perform 25 independent runs and report the median test loss across all 25 runs on top of each plot. Both SAM and GD+upsampling exhibit  lower simplicity bias, i.e., more uniform feature learning, (demonstrated by abrupt drops in the loss) than GD, and obtain lower test loss. }
    
    \label{fig:toy_results}
\end{figure*}
For the remainder of our analysis, we will assume rank-one ($r = 1$) key and query matrices $\mW_i^K,\mW_i^Q$ for each head; this follows from \citet{zhang2025training}, which notes that the rank-one case generalizes well to higher ranks and offers relevant insights.


\begin{lemma}[Equivalence to three-layer convolutional network (CNN); from $\S$4.1 of \citet{zhang2025training}]
\label{lem:equivalence_to_CNN}
    Under the assumption of rank-one key and query matrices and starting from a suitable initialization,\footnote{
        $\mW_i^V=\begin{bmatrix} 
            * & *\\ 
            \vzero & v_i
        \end{bmatrix}, 
        \mW_i^K=\begin{bmatrix} \vk_i^\top & 0 \end{bmatrix}, 
        \mW_i^Q=\begin{bmatrix} \vq_i^\top & * \end{bmatrix}$, where * indicates values that do not matter;
        the weights initialized to zero are not required to achieve global minimum loss on the in-context linear regression task \citep{ahn2023transformers,zhang2024trained,zhang2025training}.
    } 
     the prediction from $\texttt{ATTN}_S(\mX)$ admits the following decomposition into a three-layer CNN: 
    \[
        \texttt{ATTN}_S(\mX)_{d+1, N+1} = \sum_{i = 1}^Hv_i\vq_i^\top\mK_i\vz, 
    \]
    \[
        \mK_i = \begin{bmatrix}
            \vk_i^\top  & \vzero_d^\top  & \dots & \vzero_d^\top \\
            \vzero_d^\top & \vk_i^\top & \dots & \vzero_d^\top \\ 
            \vdots & \vdots & \ddots & \vdots \\ 
            \vzero_d^\top & \vzero_d^\top & \dots & \vk_i^\top \\ 
        \end{bmatrix} \in \mathbb{R}^{d \times d^2},
    \]
where $\vz \in \mathbb{R}^{d^2}$ is some cubic feature map of input $\mX$, and $v_i\in\R$, $\vq_i\in\R^d$, and $\vk_i\in\R^d$ contain weights from $\mW_h^K$, $\mW_h^Q$, and $\mW_h^V$ respectively.
    
    Consequently, the summation over $H$ heads of $v_i\vq_i^\top\mK_i\vz$ can be interpreted as applying $H$ convolutional filters, with the key, query, and value weights forming the three layers.


\end{lemma}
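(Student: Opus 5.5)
The plan is a direct computation of the bottom-right entry of $\texttt{ATTN}_S(\mX)$ under the prescribed sparsity pattern, followed by rearranging the resulting scalar into the form $v_i\vq_i^\top\mK_i\vz$.

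\textbf{Step 1: reduce to one head.} Since $\mX_{d+1,N+1}=0$, the residual term contributes nothing, and the sum over heads is linear. It therefore suffices to compute $\frac1N\bigl(\mW_i^V\mX\mX^\top{\mW_i^K}^\top\mW_i^Q\mX\bigr)_{d+1,N+1}$ for a single head $i$.

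\textbf{Step 2: expand the single-head term.} The bottom row of $\mW_i^V$ is $(\vzero^\top, v_i)$, so left multiplication picks out $v_i$ times the last row of $\mX$, namely $(y_1,\dots,y_N,0)$.

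The key matrix is $\mW_i^K=[\vk_i^\top\;0]$, so
\[
{\mW_i^K}^\top\mW_i^Q\mX e_{N+1}=\begin{bmatrix}\vk_i\\0\end{bmatrix}(\mW_i^Q\mX e_{N+1}).
\]
Here $\mW_i^Q\mX e_{N+1}=\vq_i^\top\vx_q+(*)\cdot 0=\vq_i^\top\vx_q$. This is the point where the arbitrary entries marked $*$ drop out.

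Combining these two facts gives the scalar
\[
\frac{v_i}{N}\sum_{n=1}^N y_n\,\vx_n^\top\vk_i\,\vx_q^\top\vq_i
= v_i\,\vq_i^\top\Bigl(\frac1N\sum_n y_n\vx_q\vx_n^\top\Bigr)\vk_i .
\]

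\textbf{Step 3: rewrite as $\vq_i^\top\mK_i\vz$.} Let $\mathbf{M}=\frac1N\sum_n y_n\vx_q\vx_n^\top\in\R^{d\times d}$, and set $\vz=\mathrm{vec}(\mathbf{M}^\top)$, i.e.\ stack the rows of $\mathbf{M}$. Then $\mK_i\vz$ has $j$-th entry $\vk_i^\top(\text{row }j\text{ of }\mathbf{M})$, which is exactly $\mathbf{M}\vk_i$. Hence the single-head term equals $v_i\vq_i^\top\mK_i\vz$.

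Each entry of $\vz$ is $\frac1N\sum_n y_n x_{q,j}x_{n,l}$, which is cubic in the entries of $\mX$, as the statement claims. Summing over $i$ gives the stated decomposition. The CNN interpretation follows by reading $\mK_i$ as one shared filter $\vk_i$ applied to $d$ blocks of $\vz$, followed by linear layers $\vq_i$ and $v_i$.

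\textbf{Main obstacle: invariance under training.} The decomposition needs the sparsity pattern to persist, not just hold at initialization. This is what ``starting from a suitable initialization'' must cover. I would check that the gradients of the loss with respect to the entries fixed to zero vanish whenever those entries are zero. The tool is the symmetry $\vx\mapsto-\vx$ of the Gaussian data: the relevant gradient terms are odd moments of $\mX$ and therefore have zero expectation. Zero entries then remain zero under GD, and also under SAM, since the perturbed point keeps the same pattern. In the expected-loss setting this follows \citet{zhang2025training}, and I would cite it rather than re-derive it.
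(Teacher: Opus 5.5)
Your computation is the paper's own. It likewise reads off the bottom-right entry under the prescribed sparsity, with the skip-connection term, the $*$ entries and the zero key entry all dropping out. Your $\vz=\mathrm{vec}(\mathbf{M}^\top)$ is exactly its $\texttt{vec}\bigl(\frac1N\sum_n y_n\vx_n\vx_q^\top\bigr)$, and the paper also defers the persistence of the zero blocks to a citation.

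One correction to your sketch of that last step. The symmetry $\vx\mapsto-\vx$ also flips every $y_n$ and $y_q$, so $\mX\mapsto-\mX$ and the gradient integrands for $\vv_i$ and $k_i$ are even in $\mX$; nothing cancels. Indeed, for a single fixed $\vw_\star$ the $\vv_i$-gradient at $\vv_i=\vzero$, $k_i=0$ is generically nonzero. What makes these gradients vanish is the task symmetry $\vw_\star\mapsto-\vw_\star$ (flipping the $y$-row with $\vx$ fixed), under which they are odd, since the $\vw_\star$ average is zero. Because this vanishing holds for all values of the remaining weights, your SAM remark then goes through.
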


This three-layer CNN model is arguably the simplest nonlinear proxy for our analysis that is both sufficiently complex and tractable. More discussion of \cref{lem:equivalence_to_CNN} in \citep{zhang2025training} is included in Appendix \ref{subsec:proof_of_equivalence}.



\textbf{Gradient Flow Dynamics of Early Training with GD.} 
The linear self-attention model with learnable parameters $\vw$ is trained to minimize the mean-squared loss $\gL(\vw^{(t)}) = \E[(y_q-\hat{y}_q)^2]$ using gradient descent (GD) at iteration $t$ with learning rate $\eta$: 
\begin{align}
\label{eq:GD_updates}
    \vw^{(t+1)} &= \vw^{(t)} - \eta\nabla_{\vw^{(t)}}\gL(\vw^{(t)}). 
\end{align}


To reason about the training dynamics, \citet{zhang2025training} established the GD dynamics on $\texttt{ATTN}_S$ via gradient flow, which can be seen as the continuous-time limit of gradient descent as $\eta\to0$\footnote{We use both matrix $\mW$ and vector $\vw$ to denote weights interchangeably, whichever is more appropriate in the context.}: 
\begin{align*}
    \tau\dot{\mW}_{\text{GD}} = \tau\frac{\mathrm{d}\mW_{\text{GD}}}{\mathrm{d}t} = -\frac{1}{2}\frac{\partial \gL}{\partial\mW_{\text{GD}}}= \E\left[(y_q-\hat{y}_q)\frac{\partial \hat{y}_q}{\partial\mW_{\text{GD}}}\right],
\end{align*}
for some time constant $\tau$.

\textbf{Feature Learning and Simplicity Bias in GD.} 
During training,
$\texttt{ATTN}_S$ activates one head at a time to learn the most prominent 
feature among the ones that are not learned yet. Formally, for data covariance $\mSigma = \sum_{i=1}^d \lambda_i \ve_i \ve_i^\top$, where each eigenvector $\ve_i$ captures a feature with strength $\lambda_i$, a head $i$ becomes active and learns the current eigenvector $\ve_h$\footnote{%
    For notational convenience, the attention heads are permuted so that the head $i$ learns the $i$-th eigenvector $\ve_i$
}  that has the largest eigenvalue not yet learned. Its key and query weights grow the fastest along the eigenvector\footnote{Specifically, if $v_h > 0$, both $\vk_h$, $\vq_h$ approximately grow along $\ve_h$; if $v_h < 0$, $\vk_h$ and $\vq_h$ grow in opposite directions, one along $\ve_h$ and the other along $-\ve_h$. The upcoming ansatz implicitly assumes the former.}. 

As shown in \cref{fig:toy_results} (left), this results in a stage-wise training loss, with multiple abrupt drops separated by plateaus.
Every drops indicates learning a new feature by $\texttt{ATTN}_S$ via saddle-to-saddle movement and during plateaus the model lingers near a fixed point. 

 As progressive learning in $\texttt{ATTN}_S$ always targets the most prominent eigenvector (easiest feature) among remaining ones, this implies the existence of Simplicity Bias in GD.\looseness=-1

\textbf{Global Dynamics $\iff$ Last-Layer $v$ Dynamics.} 
During the early plateau of learning $\ve_{m+1}$, the layer weights approximately evolve as ~\citep{zhang2025training}:
 \begin{equation} \label{eq:ansatz1}
     \vk_i = \vq_i = v_i\ve_i = \left(\lambda_i + \frac{\lambda_i + \Tr(\mSigma)}{N}\right)^{-1/3} \ve_i, 
 \end{equation}
 \[
     \text{for} \ 1 \leq i  \leq m \text{ if $m \neq 0$  (already learned features)} ; 
 \]
 \begin{equation} \label{eq:ansatz2}     
    \vk_{m+1} = \vq_{m+1} = v_{m+1}(t)\ve_{m+1}, \text{for current feature}; 
 \end{equation}
 \begin{equation} \label{eq:ansatz3}
     \vk_i = \vq_i =  \bm{0},  v_i = 0, \ \text{for}\ m+2 \leq i \leq H.    
 \end{equation}


Eq.~\ref{eq:ansatz2} shows that only the current active head has dynamics ($v_{m+1}(t)$) and the other heads remain approximately unchanged (no dependence on $t$). 
We see that the key and query weights only depend on the value $v_i$ and the current feature $\ve_i,$ which determine the norm and direction respectively. This results in a key simplification, as we can capture the global training dynamics only by presenting the $v$ (last-layer) dynamics. 
While Eq. (\ref{eq:ansatz1})-(\ref{eq:ansatz3}) only captures the early learning phase of each feature, \cref{fig:toy_results} confirms a similar behavior throughout the training.
\section{Analyzing Feature Learning for SAM}
In this section, we first introduce Sharpness-Aware Minimization (SAM) and then analyze its early feature learning dynamics. Finally, we build on our analysis to compare simplicity bias of SAM with GD.

\subsection{Sharpness-Aware Minimization (SAM)}

While GD can effectively find weights that yield low loss, such local minima may have noisy local landscapes 
with suboptimal generalization performance. 
To mitigate this issue, SAM seeks more robust parameters that reside in ``flatter" neighborhoods and thus ensures superior generalization performance. \looseness=-1
Formally, we have the following weight update rules for loss $\gL(\vw^{(t)})$ at iteration $t$ with learning rate $\eta$ and perturbation strength $\rho$: 
\begin{align} \label{eq:GD_and_SAM_updates}
    &\textbf{SAM: } \  \vw^{(t + 1)} = \vw^{(t)} - \eta \nabla_{\vw^{(t)}}\gL(\vw^{(t)} + \bm{\varepsilon}^{(t)}), \quad  \notag \\
    &\text{where } \ \bm{\varepsilon}^{(t)} = \rho\frac{\nabla_{\vw^{(t)}}\gL(\vw^{(t)})}{\|\nabla_{\vw^{(t)}}\gL(\vw^{(t)})\|_2} \ \ \text{for $\rho > 0$}.
\end{align}
Note that the normalized gradient is detached from the gradient computation.

Technically, SAM performs gradient descent on weights perturbed in the gradient ascent direction, i.e. the most adversarial local perturbation. This mechanism automatically implements a firewall against sharp minima by discouraging convergence to regions where the loss landscape could change steeply.




\textbf{Gradient Flow for Early Training with SAM. }
%
Given a small perturbation radius $\rho$, we can approximate the SAM update (\ref{eq:GD_and_SAM_updates}) via a first-order Taylor expansion: 
\allowdisplaybreaks
\begin{align} \label{eq:SAM_first_order}
    \nabla_{\vw^{(t)}}\gL&(\vw^{(t)} + \bm{\varepsilon}^{(t)}) \notag \\
    =\  &\nabla_{\vw^{(t)}}\gL(\vw^{(t)}) + \mH^{(t)}\bm{\varepsilon}^{(t)} + O\left(\rho^2\right) \notag \\ 
    \approx \ & \left(\mI + \frac{\rho}{\|\nabla_{\vw^{(t)}}\gL(\vw^{(t)})\|_2}\mH^{(t)}\right)\nabla_{\vw^{(t)}}\gL(\vw^{(t)}) \notag \\ 
     =\ & (\mI + \mP)\nabla_{\vw^{(t)}}\gL(\vw^{(t)}), 
\end{align}
where $\mP$ is the ``\textit{perturbation matrix}" and $\mH^{(t)} = \mH({\vw^{(t)}}) :=\nabla_{\vw^{(t)}}^2\gL(\vw^{(t)}) $ denotes the Hessian matrix, from which the SAM optimizer ``borrows curvature information" to help escape from sharp areas.

We now consider $\texttt{ATTN}_S$ trained with SAM via this approximation (\ref{eq:SAM_first_order}), which holds for small perturbation $\rho$. We study the gradient flow of loss $\gL =\mathbb{E}[(y_q - \hat{y}_q)^2]$ under SAM. In particular, we are interested in: 
\begin{equation} \label{eq:SAM_GD_dynamics}
    \tau\dot{\mW}_{\text{SAM}} = -\frac{1}{2}\frac{\partial \gL}{\partial \mW}\left(\mW + \frac{\rho \nabla_{\mW}\gL}{\|\nabla_{\mW}\gL\|}\right)\approx (\mI + \mP)\tau\dot{\mW}_{GD}, 
\end{equation}
where $\tau$ is the time constant. The last approximation follows from the first-order expansion (\ref{eq:SAM_first_order}) and the perturbation $\mP$ detached from gradient. 

\subsection{Early Training ODE Dynamics for SAM}\label{sec:methods}
Next, we theoretically analyze feature learning for the multi-head self-attention model trained with SAM.

First, we preset the following lemma which ensures that the model under SAM maintains a similar landscape to GD in terms of the distribution of fixed points. 

\begin{lemma}[Manifolds of saddle points] \label{lem:static_saddle_points} Let $\mP$ be the perturbation matrix induced by SAM in the first-order approximation, and $\sigma_{\text{min}}(\mI - \mP) > 0$ during the gradient flow. Then GD and SAM dynamics ($\dot{\mW}_{\text{SAM}}$ and $\dot{\mW}_{\text{GD}}$) share the same space of fixed point manifolds (see Appendix \ref{subsec:manifolds}) for $\texttt{ATTN}_S(\cdot)$, where $\mW = \{v_i, \vq_i, \vk_i : i = 1, \dots, H\}$. 
\end{lemma}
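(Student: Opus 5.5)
The plan is to reduce the statement to a linear-algebra fact about the first-order SAM vector field in \eqref{eq:SAM_GD_dynamics}. Flatten the parameters $\mW=\{v_i,\vq_i,\vk_i\}_{i=1}^H$ into one vector $\vw$. By \eqref{eq:SAM_first_order}, $\tau\dot{\vw}_{\text{SAM}}=(\mI+\mP)\,\tau\dot{\vw}_{\text{GD}}$, where $\mP=\rho\,\mH/\|\nabla\gL\|_2$ is detached and evaluated at the current point. A fixed point of either flow is a point where the velocity vanishes. So it suffices to show that, pointwise along the trajectory, $(\mI+\mP)\vu=\vzero$ forces $\vu=\vzero$. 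Then the zero sets of the two vector fields coincide, and hence so do the fixed-point manifolds described in Appendix~\ref{subsec:manifolds}.

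The first step is the easy inclusion. If $\dot{\vw}_{\text{GD}}=\vzero$, then $\dot{\vw}_{\text{SAM}}=(\mI+\mP)\vzero=\vzero$, so every GD fixed point is a SAM fixed point.

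The second step is the reverse inclusion, which uses the hypothesis. The $\mH$ in $\mP$ is the Hessian of $\gL$ while the flow is written with $-\tfrac12\partial\gL/\partial\mW$, so the sign convention of $\mP$ relative to $\dot{\mW}_{\text{GD}}$ can flip. In that convention the relevant operator is $\mI-\mP$, which is how the statement phrases the assumption. Under $\sigma_{\min}(\mI-\mP)>0$ this operator is injective, so $(\mI-\mP)\dot{\vw}_{\text{GD}}=\vzero$ implies $\dot{\vw}_{\text{GD}}=\vzero$. I will state the sign bookkeeping once and then use whichever of $\mI\pm\mP$ matches \eqref{eq:SAM_GD_dynamics}. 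Because $\mH$ is symmetric, the assumption amounts to saying that $\rho/\|\nabla\gL\|$ times no Hessian eigenvalue equals $1$.

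The third step transfers this to the manifolds. The fixed-point set of GD for $\texttt{ATTN}_S$, via the CNN form of \cref{lem:equivalence_to_CNN}, consists of the manifolds in the appendix: subsets of heads aligned with eigenvectors $\ve_i$ with the norms given in \eqref{eq:ansatz1}, the remaining heads zero, together with rescaling and sign symmetries. These sets are defined purely as zero sets of $\dot{\vw}_{\text{GD}}$. Equality of the zero sets therefore gives equality of the manifolds, including their dimensions.

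The main obstacle is that $\mP$ contains $1/\|\nabla\gL\|$, which is singular exactly at fixed points. I would handle this by reading the hypothesis ``during the gradient flow'' as applying to points with nonzero gradient. The argument then becomes: at any point with $\nabla\gL\neq\vzero$, the SAM velocity $(\mI\pm\mP)\nabla\gL$ is nonzero by injectivity. So SAM has no spurious fixed points away from the GD critical set. At the GD critical set itself, the SAM update is understood as its limit, or equivalently as having $\bm{\varepsilon}=\vzero$, which gives zero velocity. I would add one remark: $\mP\nabla\gL=\rho\,\mH\nabla\gL/\|\nabla\gL\|$ stays bounded as the gradient vanishes, so the SAM field extends continuously by zero, and both inclusions then hold cleanly.
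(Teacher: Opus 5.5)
Your core argument, that the first-order SAM velocity is a linear map applied to the GD velocity so injectivity of that map makes the zero sets coincide, is exactly the paper's one-line proof. The gap is the sign bookkeeping in your second step. Nothing flips. The factor $-\tfrac12$ is a scalar and commutes with $\mI+\mP$, so $\tau\dot{\vw}_{\text{SAM}}=-\tfrac12(\mI+\mP)\nabla\gL=(\mI+\mP)\,\tau\dot{\vw}_{\text{GD}}$ in every convention. The operator that must be injective is therefore $\mI+\mP$. The paper's proof does assume $\sigma_{\min}(\mI+\mP)>0$; the $\mI-\mP$ in the statement is a typo, which you should flag rather than rationalize.

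The two hypotheses are not interchangeable. Since $\mP$ is symmetric, $\sigma_{\min}(\mI-\mP)>0$ only rules out the eigenvalue $+1$ of $\mP$. SAM acquires a spurious zero exactly when $\mH\nabla\gL=-(\|\nabla\gL\|/\rho)\nabla\gL$, i.e.\ when $\mP$ has eigenvalue $-1$ (negative curvature along the gradient, which is typical near saddles). So your spectral reformulation should read ``no eigenvalue equals $-1$''.

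The paper's own model shows this happens. For the first feature, the reduced SAM ODE $\tau\dot v_1=\lambda_1^2v_1^2-\frac{2\rho}{\sqrt3}\lambda_1^2v_1$ vanishes at $v_1=2\rho/\sqrt3$, where the GD ODE does not. Using the appendix Hessian, at that point and to leading order in $v_1$, $\mP$ maps $(1,\ve_1^\top,\ve_1^\top)^\top$ to its negative. Its other eigenvalues are $\tfrac12$ (twice), $\pm\lambda_j^2/(2\lambda_1^2)$ for $j\ge2$, and $0$ on the inactive heads. Hence $\sigma_{\min}(\mI-\mP)=\tfrac12>0$: the printed hypothesis holds and the conclusion fails. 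With the hypothesis read as $\sigma_{\min}(\mI+\mP)>0$, your first three steps go through.

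Your closing remark is also wrong as stated: boundedness of $\mP\nabla\gL=\rho\,\mH\nabla\gL/\|\nabla\gL\|$ does not make it tend to zero. Consider a critical point where $\mH\neq\vzero$, e.g.\ points of $\gM(\gS_m)$ with $m\ge1$, where the learned heads carry curvature. There this term stays of order $\rho\|\mH\|$ and depends on the direction from which the point is approached. So the SAM field is discontinuous there, not a continuous extension by zero. Your claim does hold at the origin, where the Hessian of this trilinear model vanishes. You do not need it, though. The convention $\bm{\varepsilon}=\vzero$ at critical points, where the update is undefined anyway, already gives the inclusion of GD fixed points into SAM fixed points. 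The paper ignores this issue entirely, so handling it through that convention is a modest improvement over its proof.
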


Hence, under this non-degeneracy condition, the SAM flow does not introduce new saddle points or remove existing ones: any 
difference in feature learning for SAM vs GD 
as we show next, can be attributed to the optimizer-induced dynamics, rather than to a change in the ``geometry of equilibria".

Next, by explicitly computing the Hessian matrices and simplifying, we derive early training dynamics as an ODE when each head starts learning. Similar to ansatz (\ref{eq:ansatz1})-(\ref{eq:ansatz3}), we only present the last-layer $v$ dynamics, which can capture global behaviors: 

\begin{theorem}[Early Training Dynamics of SAM] \label{thm:SAM_ODE}
    Under the setting of Lemma \ref{lem:static_saddle_points}, ansatz (\ref{eq:ansatz1})-(\ref{eq:ansatz3}), 
    and with data covariance matrix $\mSigma = \sum_{i=1}^d \lambda_i \ve_i \ve_i^\top$ indexed by decreasing \textit{distinct bounded} eigenvalues, the early training dynamics of GD and first-order SAM expansion (\ref{eq:SAM_first_order}) on $\texttt{ATTN}_S(\mX)$ when learning the $i$-th feature/eigenvector ($i = 1, \dots, \min\{H, d\}$) can be approximately captured by the one scalar-variable ODEs: 
    \begin{equation} \label{eq:SAM_ODE}
        \textbf{SAM} \text{ (ours)}: \tau\dot{v}_i = \lambda_i^{2} v_i^{2}  - \frac{2\rho}{\sqrt{3}}\lambda_i^2v_i
    \end{equation}
    \begin{equation} \label{eq:GD_ODE}
        \textbf{GD} \text{~\citep{zhang2024context}}: \tau\dot{v}_i = \lambda_i^{2} v_i^{2}
    \end{equation}
    under sufficiently small weight initializations. 
\end{theorem}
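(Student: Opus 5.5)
Under the ansatz (\ref{eq:ansatz1})--(\ref{eq:ansatz3}), I would restrict the dynamics to the three-parameter subsystem of the active head $i=m+1$. I parametrize $\vk_i = a\,\ve_i$, $\vq_i = b\,\ve_i$, $v_i = c$, and track the flow on the invariant set $a=b=c$. Keys, queries and values of the inactive heads are frozen, so by \cref{lem:equivalence_to_CNN} the loss depends on the active head only through the scalar $s = c\,ab$, the effective coefficient of that head in direction $\ve_i$.

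\textbf{Step 1: reduce the loss to the active head.} I expand $\gL$ near the saddle with learned heads $1,\dots,m$ fixed. Gaussian fourth-moment computations (as in \citet{zhang2025training}) give
\[
\gL \approx \mathrm{const} - 2\lambda_i^2 s + O(s^2).
\]
The quadratic term is $O(v^6)$, which is negligible for small initialization. Using the GD flow $\tau\dot{\mW}_{\text{GD}}=-\tfrac12\partial\gL/\partial\mW$, each of $a,b,c$ then obeys $\tau\dot a=\lambda_i^2 bc$ and its symmetric analogues. Restricting to $a=b=c=v_i$ recovers (\ref{eq:GD_ODE}), $\tau\dot v_i=\lambda_i^2v_i^2$. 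This is the known GD result, so I would simply cite it.

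\textbf{Step 2: compute the SAM correction $\mP\,\tau\dot{\mW}_{\text{GD}}$ via (\ref{eq:SAM_GD_dynamics}).} On this subsystem, up to the higher-order terms dropped above, $\gL\approx \mathrm{const}-2\lambda_i^2\,abc$.
\begin{itemize}
\item The gradient is $\nabla\gL=-2\lambda_i^2(bc,ac,ab)$. At $a=b=c=v$ it equals $-2\lambda_i^2v^2(1,1,1)$, so $\|\nabla\gL\|=2\sqrt3\,\lambda_i^2v^2$.
\item The Hessian is $-2\lambda_i^2$ times the matrix with zero diagonal and off-diagonal entries $c,b,a$. At the symmetric point this is $-2\lambda_i^2 v(\1\1^\top-\mI)$, and the vector $\1=(1,1,1)$ is an eigenvector with eigenvalue $-4\lambda_i^2 v$.
\item Hence $\mP\nabla\gL=\frac{\rho}{\|\nabla\gL\|}\mH\nabla\gL=\frac{\rho}{2\sqrt3\lambda_i^2v^2}(-4\lambda_i^2v)\nabla\gL=-\frac{2\rho}{\sqrt3\,v}\nabla\gL$.
\end{itemize}
Multiplying $\tau\dot v=\lambda_i^2v^2$ by $1-\frac{2\rho}{\sqrt3 v}$ yields
\[
\tau\dot v_i=\lambda_i^2v_i^2-\frac{2\rho}{\sqrt3}\lambda_i^2 v_i,
\]
which is exactly (\ref{eq:SAM_ODE}). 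The correction stays along $\1$, so the ansatz direction is preserved and the one-variable reduction is self-consistent.

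\textbf{Step 3: justify the neglected terms.} This is the step I expect to be the main obstacle. I need to show that:
\begin{itemize}
\item Hessian blocks coupling the active head to the frozen heads are negligible;
\item Hessian blocks coupling it to off-ansatz directions (components of $\vk_i,\vq_i$ orthogonal to $\ve_i$) are negligible;
\item contributions from the gradient of $\gL$ at the perturbed point are negligible.
\end{itemize}

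For the frozen learned heads, the gradient vanishes at the fixed point (\cref{lem:static_saddle_points} and Appendix \ref{subsec:manifolds}). The cross-Hessian terms therefore multiply a zero gradient component, but the normalization is dominated by the active head's gradient. I would argue these cross terms scale like $O(v^2)$ against the main $O(v)$ coefficient. For the orthogonal directions, distinct eigenvalues make $\mSigma$-diagonality kill the mixed moments at the ansatz point. I would also use the assumption $\sigma_{\min}(\mI-\mP)>0$ together with small initialization to control the $O(\rho^2)$ Taylor remainder. Bounded eigenvalues make all these $O(\cdot)$ constants uniform in $i$.
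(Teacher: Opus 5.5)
Your proposal is correct and is essentially the paper's argument: on the active head you form $\mP\,\tau\dot{\mW}_{\text{GD}}=\frac{\rho}{\|\nabla\gL\|}\mH\bigl(-\tfrac12\nabla\gL\bigr)$ with $\|\nabla\gL\|=2\sqrt{3}\,\lambda_i^2v_i^2$, and you use that the Hessian acts on the gradient direction $(1,\ve_i,\ve_i)$ with leading eigenvalue $-4\lambda_i^2v_i$. The one difference is that you truncate the loss to its cubic term $-2\lambda_i^2 abc$ before differentiating, whereas the paper computes the full $(2d+1)$-dimensional Hessian blocks (including their $O(v_i^4)$ pieces) and drops the higher-order terms only at the end. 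Your Step 3 is easier than you expect: at the ansatz point the frozen heads' gradients are exactly zero, and $\ve_i$ is a common eigenvector of $\mSigma$, $\mSigma^2$ and $\E[\hat{\mSigma}^2]$, so cross-head and off-ansatz contributions vanish exactly rather than at order $O(v^2)$. The $O(\rho^2)$ Taylor remainder needs no control, because the statement concerns the first-order expansion; moreover, neither $\sigma_{\min}(\mI-\mP)>0$ nor small initialization would control it, since $\rho$ must be of the order of $v_i$ for the SAM ODE to show growth.
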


The proof can be found in Appendix \ref{app:proof_for_separate}. Clearly, SAM introduces a perturbation-dependent term that slows down learning across all the features. This slow-down factor turns out to be non-uniform and alters how feature learning is appointed, as we formalize in the next section. Note that in practice, SAM usually decelerates training for the sake of better generalization, as it tracks two gradients at a time, matching the theoretical difference here. 

The ODEs (\ref{eq:SAM_ODE})-(\ref{eq:GD_ODE}) relate the time variable to the evolution of weight $v_i$ for each feature. Since they are both one-dimensional, they admit a closed-form (or a simple integral) solution for relevant training time expressions. This empowers us to quantify how learning compute is allocated across features; in the next section, we use this observation to formalize
simplicity bias and show how SAM reshapes it compared to GD.

\subsection{SAM Has Lower Simplicity Bias than GD}


We now build upon the ODE results and prove that during early training, SAM effectively uniformizes feature learning (i.e. reduces SB) in our multi-head self-attention model. We 
first establish a proxy for quantifying SB, 
and show that the times to complete early feature learning form a more uniform sequence for SAM compared to that of GD.


\textbf{Entropy as a Proxy for Simplicity Bias.}
\label{def:simplicity_bias}
    Let $t_1,\dotsc,t_M$ denote the (feature-wise) times taken by the model to learn $M$ features. Lower simplicity bias means that learning is uniform, where a small subset of features does not consume most of the time. 
    
    To measure this evenness in a scale-free way, we sum-normalize the time sequence and obtain $p_i = t_i / \sum_j t_j, 1\leq i\leq M.$ Viewing this as discrete probability masses, we quantify SB via its Shannon entropy:  
    \allowdisplaybreaks
    \begin{align}\label{eq:sb}
        \ent\left(\left\{t_i\right\}_{i=1}^M\right) & = -\sum_{i=1}^M \frac{t_i}{\sum_j t_j}\log \left(\frac{t_i}{\sum_j t_j}\right) \notag \\ & = -\sum_{i=1}^M p_i\log p_i.
    \end{align}
    A higher entropy indicates a discrete distribution closer to a uniform one and smaller SB towards certain features, since the learning times do not differ too much. 

\textbf{Quantifying SAM's Simplicity Bias in Early Training. }
Theorem~\ref{thm:SAM_ODE} describes an early-training regime where the terms in Eq.~(\ref{eq:SAM_ODE}), (\ref{eq:GD_ODE}) govern the evolution of $v_i(t)$.  We can then take proper integrals to obtain the learning times of each $v_i$ during early training. 
%

Note that the final learned weight $v_i$ in Eq. (\ref{eq:ansatz1}), given a large number of input tokens $N$, is: 
\[
    v_i\ve_i = \left(\lambda_i + \frac{\lambda_i + \Tr(\mSigma)}{N}\right)^{-1/3} \ve_i = O(\lambda_i^{-1/3}) \ve_i. 
\]
This motivates our final assumption that early training brings each feature to a fixed fraction of its eventual magnitude.
\begin{assumption}[Minimal feature learning in early training]\label{assumption:early_training}
    For learning each feature, the evolution of weight $v_i$ admits $c$ (a possibly small constant) s.t. $v_i = c\lambda_i^{-1/3}$ around the end of early training for \textbf{both GD and SAM}. 
\end{assumption}

Assumption \ref{assumption:early_training} implicitly provides integral bounds on $v_i$ so that we can directly integrate the inverse of the ODEs from Theorem \ref{thm:SAM_ODE} and obtain two time sequences. For example, starting with some initialization $v_0$, the time of early training for head $i$, by Theorem \ref{thm:SAM_ODE}, is:
\[
    t_i^{GD} = \int_{v_0}^{c\lambda_i^{-\frac{1}{3}}}\frac{dt}{dv_i} \, dv_i = \int_{v_0}^{c\lambda_i^{-\frac{1}{3}}}\frac{\tau}{\lambda_i^2v_i^2} \, dv_i. 
\]
A similar sequence can be defined for SAM. Now we can use majorization theory~\citep{marshall1979inequalities}, a branch in math that rigorously defines spread and evenness within sequences, to quantify Simplicity Bias via Entropy.  More details can be found in Appendix \ref{subsec:proof_of_SAM_reduces_SB}. The following theorem presents our key result regarding the uniformity in SAM feature learning. 

\begin{figure*}[!t]
    \centering
    \includegraphics[width=0.88\linewidth]{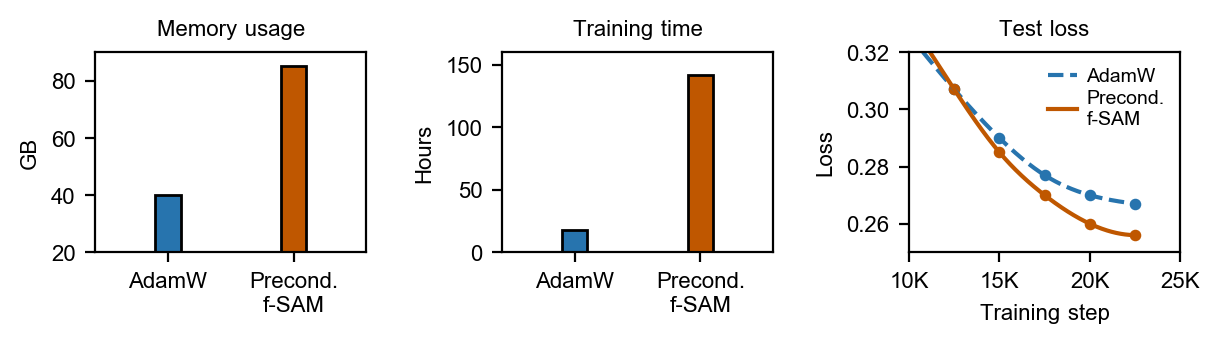}
    \vspace{-3mm}
    \caption{\textbf{Comparison of preconditioned f-SAM and AdamW.}
    f-SAM achieves lower evaluation loss compared to AdamW but incurs around 2x GPU memory usage and 6x training time, when fine-tuning Phi2-2.7B on the MathInstruct dataset. This makes it prohibitively expensive for training even medium-sized LLMs (we couldn't finish fine-tuning with f-SAM after 6 days on 4$\times$ NVIDIA A40 GPUs, hence reporting validation loss).}
    \label{fig:pref_sam_vs_adamw}
    \vspace{-2mm}
\end{figure*}

\begin{theorem}[SAM has lower simplicity bias than GD] \label{thm:SAM_reduces_SB}
     In the setting of Theorem \ref{thm:SAM_ODE} and Assumption \ref{assumption:early_training}, suppose $\texttt{ATTN}_{S}(\mX)$ learns $M$ features in total starting from the same small initializations for both SAM and GD. Each $v_i > 0$ starts at $\varepsilon_i$ and reaches some $c\lambda_i^{-1/3}$ around the end of early training, $i = 1, \dots, M$. Taking $\varepsilon = \max\{\varepsilon_i\}_{i=1}^M$, we integrate over the early training phase ``common" for all the features to get the times: 
\[
   t_i^{\text{GD}}:= \int_{\varepsilon}^{c\lambda_i^{-1/3}}\frac{\tau}{\lambda_i^2v_i^2} \ dv_i, 
\]
\[
    t_i^{\text{SAM}} := \int_{\varepsilon}^{c\lambda_i^{-1/3}}\frac{\tau}{\lambda_i^2v_i^2 - \frac{2\rho}{\sqrt{3}}\lambda_i^2v_i} \ dv_i
\]

Then as long as $\rho < \frac{\sqrt{3}}{2} \varepsilon_i$, 
we have that 
    \[
         \ent\left(\left\{t_i^{\text{SAM}}\right\}_{i=1}^M\right) >  \ent\left(\left\{t_i^{\text{GD}}\right\}_{i=1}^M\right).
    \]
That is, SAM results in lower simplicity bias early in training compared to GD. 
\end{theorem}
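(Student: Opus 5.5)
The plan is to compute both time sequences in closed form, show that SAM's time for each feature equals GD's time multiplied by a factor that is \emph{larger} for the easier (larger-$\lambda_i$) features, and then conclude with majorization and the strict Schur-concavity of Shannon entropy. Throughout, write $a := \frac{2\rho}{\sqrt3}$ and $u_i := c\lambda_i^{-1/3}$. I assume, implicitly in the statement, that $u_i > \varepsilon$ for all $i$. The hypothesis $\rho < \frac{\sqrt3}{2}\varepsilon_i$ for every $i$ gives $a < \varepsilon$. Hence $v(v-a) > 0$ on $[\varepsilon, u_i]$, so every $t_i^{\text{SAM}}$ is finite and positive.

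\textbf{Step 1: closed forms.} Direct integration, using partial fractions $\frac{1}{v(v-a)} = \frac1a\bigl(\frac{1}{v-a} - \frac1v\bigr)$ for SAM, gives
\[
t_i^{\text{GD}} = \frac{\tau}{\lambda_i^2}\Bigl(\frac1\varepsilon - x_i\Bigr),\qquad
t_i^{\text{SAM}} = \frac{\tau}{\lambda_i^2}\cdot\frac{h(1/\varepsilon)-h(x_i)}{1}\,,
\]
where $x_i := 1/u_i = \lambda_i^{1/3}/c < 1/\varepsilon$ and $h(x) := -\frac1a\ln(1-ax)$.

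Taking the ratio, $r_i := t_i^{\text{SAM}}/t_i^{\text{GD}}$ is the slope of the secant of $h$ between $x_i$ and $1/\varepsilon$.

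\textbf{Step 2: monotonicity.} The function $h$ is strictly convex on $[0, 1/a)$, and $1/\varepsilon < 1/a$. A secant slope of a strictly convex function over $[x, 1/\varepsilon]$ is strictly increasing in $x$, so $r_i$ is strictly increasing in $\lambda_i$.

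On the other hand, $t_i^{\text{GD}} = \tau\lambda_i^{-2}(1/\varepsilon - \lambda_i^{1/3}/c)$ is a product of two positive factors that both decrease in $\lambda_i$. With eigenvalues indexed in decreasing order, this gives $t_1^{\text{GD}} < \dots < t_M^{\text{GD}}$ and $r_1 > \dots > r_M$. In words, SAM inflates the time of the simple, fast features the most.

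\textbf{Step 3: majorization.} Let $p_i = t_i^{\text{GD}}/\sum_j t_j^{\text{GD}}$ and $q_i = r_i p_i/\sum_j r_j p_j$. The key lemma is that if $p$ is increasing and $r$ is strictly decreasing, then $q \prec p$. Strict Schur-concavity of the entropy then gives $\ent(q) > \ent(p)$, which is the claim, since entropy is invariant to the common scale.

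To prove the lemma I would compare tail sums, i.e.\ the sums of the $k$ largest entries, which sit at indices $M-k+1,\dots,M$ for both vectors. It suffices to show, for each $k$,
\[
\frac{\sum_{i>M-k} r_i p_i}{\sum_{i} r_i p_i} \le \sum_{i>M-k} p_i .
\]
This is a Chebyshev-type statement: the weighted average of $r$ over the tail indices (weights $p_i$) is at most the overall $p$-weighted average of $r$, because $r$ is smallest on the tail. The inequality is strict for $1 \le k < M$ since $r$ is strictly monotone.

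There is one more requirement: $q$ must be sorted the same way as $p$, or majorization must be checked via the sorted vector. The general definition uses the $k$ largest entries of $q$, and any $k$ entries of $q$ have total at most that of the sorted-tail bound. So I would instead prove the bound for arbitrary index sets $S$ of size $k$: $\sum_{i\in S} q_i \le \max_{|S'|=k}\sum_{i\in S'} p_i$. I expect this to follow from the same averaging argument applied after a rearrangement.

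\textbf{Main obstacle.} The delicate step is exactly this ordering issue in Step 3. Because SAM multiplies the small $p_i$ by large $r_i$, the order of $q$ may differ from the order of $p$. I would first try to show that the tail-sum bound holds for every $k$-subset by bounding $\sum_{i\in S} r_i p_i$ via $r_i \le \bar r$-type comparisons. If that fails, I would fall back to the standard result that $q = D p / \mathbf 1^\top D p$ with $D$ diagonal and oppositely ordered to $p$ is a T-transform image of $p$. This follows by successive two-coordinate averaging, since rescaling a pair with ratio pulled toward $1$ is a Robin Hood transfer.
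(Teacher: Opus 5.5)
Your Steps 1--2 are correct and follow the paper closely. The secant-slope-of-a-convex-$h$ argument for $r_i = t_i^{\text{SAM}}/t_i^{\text{GD}}$ is a tidy equivalent of the paper's Step 2, where $w(u)=I_2(u)/I_1(u)$ is shown to be decreasing because it is a $v^{-2}$-weighted average of the decreasing function $\phi(v)=v/(v-\hat\rho)$. Your tail-sum/Chebyshev averaging is the paper's lemma on increasing-weight renormalization. The gap is in Step 3, exactly where you flagged it, and the ways around it that you propose do not work. The ``key lemma'' as you state it (if $p$ is increasing and $r$ strictly decreasing, then $q \prec p$) is false. With $M=2$, $p=(0.1,0.9)$ and $r=(100,1)$ you get $q\approx(0.917,0.083)$, whose largest entry exceeds $\max_i p_i=0.9$. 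So in fact $p \prec q$, and the entropy goes \emph{down}. The same example refutes both fallbacks. The bound $\sum_{i\in S}q_i \le \max_{|S'|=k}\sum_{i\in S'}p_i$ fails for $S=\{1\}$. And $q$ is not a T-transform image of $p$, since a reweighting that overshoots past equality is not a Robin Hood transfer. Opposite ordering of $r$ and $p$ cannot by itself give majorization. You also need $q$ to be ordered like $p$, which is exactly the hypothesis $B_1\ge\cdots\ge B_n$ in the paper's lemma.

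The missing observation takes one line and uses only what you already computed. Write $t_i^{\text{SAM}} = \tau\lambda_i^{-2}\bigl(h(1/\varepsilon)-h(x_i)\bigr)$. Just like $t_i^{\text{GD}}$, this is a product of two positive factors that both decrease in $\lambda_i$, because $h'(x)=1/(1-ax)>0$ and $x_i=\lambda_i^{1/3}/c$ increases with $\lambda_i$. Hence $t_1^{\text{SAM}}<\cdots<t_M^{\text{SAM}}$, so $q$ is increasing in $i$ just as $p$ is. The index sets $\{M-k+1,\dots,M\}$ are then the top-$k$ sets for both vectors, and your tail-sum inequalities are precisely the majorization inequalities. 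They are strict for $1\le k<M$, so $q$ is not a permutation of $p$ and strict Schur-concavity of entropy applies. This is how the paper does it: its Step 1 records that both $t^{\text{GD}}$ and $t^{\text{SAM}}$ are strictly increasing in $u_i$ before the lemma is invoked. No rearrangement or T-transform argument is needed.
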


\cref{fig:toy_results} (right) empirically verifies that the drops in training loss are more uniform globally and under more general Gaussian initializations (no $v > 0$ restriction). Notably, while our analysis only considers early training, we see that in practice similar results hold throughout training.

\fbox{\parbox{0.47\textwidth}{
\textbf{\textit{Remark}. }
Theorem \ref{thm:SAM_reduces_SB} shows that SAM reduces simplicity bias by making feature learning more homogeneous across directions early in training. This identifies lower SB as a key factor contributing to the superior generalization performance of SAM for training transformer models.
}}

Despite its superior generalization, SAM is prohibitively for even medium sized LLMs. \cref{fig:pref_sam_vs_adamw} shows that when fine-tuning Phi-2 with 2.7B parameters, SAM yields a lower test loss in expense of over 2x memory and over 6x training time. 
This motivates our next section, which proposes to reduce SB during training with gradient-based optimizers, e.g. AdamW and Muon, by altering the training data distribution.\looseness=-1

\section{A Data-centric Approach to Reduce Simplicity Bias }\label{sec:method}


In this section, 
we leverage our theoretical insight to propose a light-weight data-centric algorithm for lowering simplicity bias during training to boost the generalization performance.

In language modeling, a single example can often be predicted using multiple features, such as lexical memorization (specific words or phrases), syntactic templates, stylistic cues, topic-level correlations, or shallow semantic associations. As soon as the model captures one such feature—e.g., recognizing a familiar pattern or phrase—the token-level predictions improve and the loss drops, even if deeper semantic or reasoning-related features remain unlearned. As a result, low loss implies that 
the model has latched onto the easiest available feature, consistent with our theoretical results confirming simplicity bias for transformer models.

\textbf{Identifying easy vs difficult examples.} To identify examples containing at least one easy feature that is learned early by the model, we train a smaller proxy model for a small number of iterations (e.g. $1/3$ of an epoch) and track the loss trajectory—loss values at a few checkpoints during training the proxy model—for all examples. We then identify examples containing easy features as those with steadily decreasing loss trajectories or having a low loss value at the last checkpoint. We separate easy vs difficult examples by clustering the loss trajectories or loss values into two groups. While both approaches are effective, we show in our experiments that relying on loss trajectories is more robust and yields a slightly higher performance.
Because the proxy is trained briefly and can be much smaller than the target, this method is lightweight and can be applied as a preprocessing step. 


\textbf{Making feature learning more uniform by amplifyng difficult features.} 
To reduce simplicity bias, we aim to amplify the features in the difficult cluster by upsampling them or synthetically rephrase them using an LLM. 
Training the target model on the upsampled data allows the underlying  optimizer to learn easy and difficult features at a more uniform speed.
%
%
As illustrated in Figure \ref{fig:toy_results}, upsampling examples in the difficult cluster induces drops corresponding to learning a new feature more uniform over time and improves generalization indicated by the lower validation loss.



\section{Experimental Setup}
\label{sec:setup}
In this section, we evaluate the effectiveness of our data centric approach in Section \ref{sec:method}.
%

\begin{figure*}[!ht]
    \centering
    \includegraphics[width=0.98\linewidth]{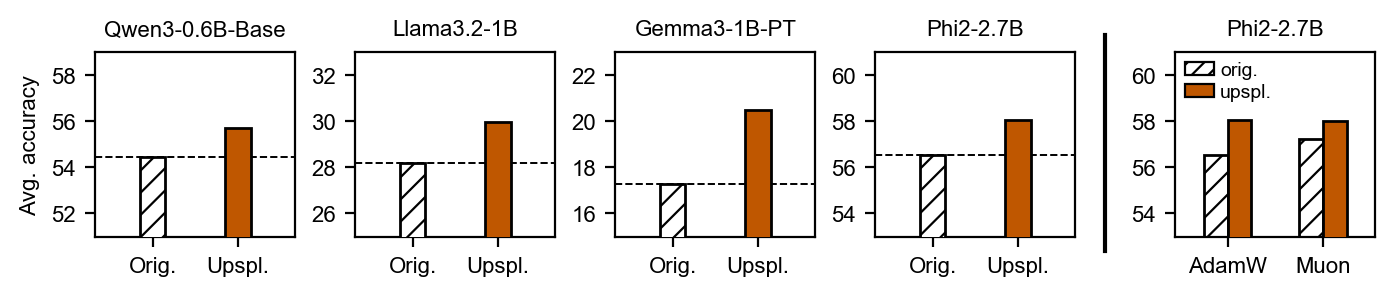}
    \vspace{-1mm}
    \caption{
        \textbf{Upsampling improves performance across models and math benchmarks.}
        (Left) Each model is finetuned with AdamW on the original dataset and \textit{with} targeted upsampling, and evaluated using zero-shot greedy decoding.
        (Right) Targeted upsampling is optimizer-agnostic, yielding consistent performance gains under Muon.
        For each model, we report the final-epoch checkpoint with the highest average accuracy across all benchmarks.
        Detailed dataset-level results are provided in \cref{tab:adamw-math-upsampling}, Appendix~\ref{app:aux_results}.
        (Different scales are used across subfigures to accurately show performance gains while accounting for variability in individual model performance.)
    }
    \label{fig:main_results}
\end{figure*}

\textbf{Dataset.}
For finetuning, we use the MathInstruct dataset \citep{yue2023mammoth}, a large and diverse math corpus containing over $262,000$ instruction-response pairs, spanning a wide variety of mathematical topics with varying difficulty.
%
%
The dataset is aggregated from $13$ heterogeneous and {imbalanced} sources, resulting in substantial variability in task formatting, solution style, and problem difficulty.
%


\textbf{Finetuning.}
We consider a range of \textit{pretrained} LLMs at different model sizes, including Qwen3-0.6B-Base \citep{yang2025qwen3}, Llama3.2-1B \citep{dubey2024llama}, Gemma3-1B-PT \citep{team2025gemma}, and Phi2-2.7B \citep{javaheripi2023phi}.
We follow a model setup similar to \citet{nguyen2024mini}.
Specifically, all the models are finetuned using trainable LoRA adapters \citep{hu2022lora} with rank $128,$ scaling factor $\alpha = 512,$ and dropout $0.05.$
For Phi2, LoRA weights are added to both the self-attention (query, key, value projections) and feed-forward modules.
For all other models, LoRA weights are only added to the self-attention modules (query, key, value, and output projections).
All models are finetuned on MathInstruct, with a maximum context length of $512$ tokens, an effective batch size of $128,$ for a total of three epochs, using AdamW or Muon with a cosine learning rate scheduler (maximum learning rate $2\times10^{-5}$ and warmup ratio $0.03$).

To identify examples for upsampling, we use Pythia-70M \citep{biderman2023pythia} as a lightweight proxy model. 
%
This provides a compute-efficient way for identifying under-learned examples without finetuning each larger target model.
%
%
%
Loss trajectories are collected based on $5$ checkpoints saved during training the proxy model for $1/3$ epoch on MathInstruct.
%
%
%
%
We separate easy vs difficult examples by clustering the loss trajectories or final loss values, and upsample the difficult cluster as detailed in Sec. \ref{sec:method}.


\textbf{Evaluation}
Consistent with \citet{yue2023mammoth}, we report the accuracies of models finetuned on the original and upsampled datasets across a variety of math test benchmarks, including GSM8K \citep{cobbe2021training}, MATH \citep{hendrycks2021measuring}, NumGLUE \citep{mishra2022numglue}, SVAMP \citep{patel2021nlp}, DeepMind \citep{davies2021advancing}, and MMLU-Math \citep{hendrycks2020measuring}.
Together, these benchmarks capture diverse aspects of model generalization.
%
%
%
Across all benchmarks, models are evaluated in a zero-shot setting, using greedy decoding with a maximum generation length of $2,048$ tokens, which is sufficient, given that models are not prompted for long, chain-of-thought reasoning.

\begin{figure*}[!ht]
    \centering
    \includegraphics[width=\linewidth]{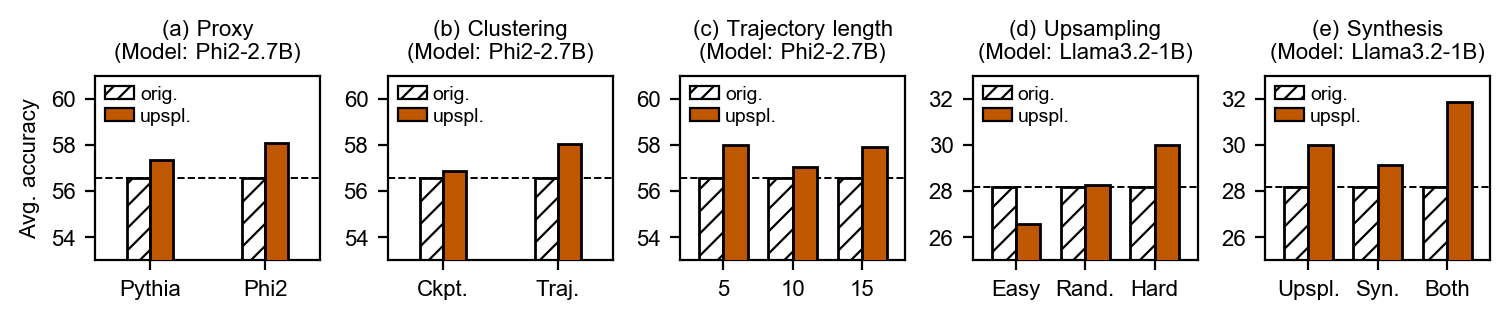}
    \vspace{-4mm}
    \caption{
        \textbf{Data upsampling ablations.}
        Each subfigure lists the target model in its title; detailed dataset- and model-level results are provided in Appendix~\ref{app:aux_results}.
        (a) Performance is comparable when using the lightweight Pythia-70M as the proxy instead of the target model.
        (b) Identifying \textit{hard} examples using loss trajectories from all $15$ proxy checkpoints performs better than using only the final checkpoint.
        (c) No clear trend emerges as a function of the number of proxy checkpoints used in loss trajectories for clustering.
        (d) Under compute-matched finetuning, upsampling hard examples yields the strongest generalization, while upsampling easy examples degrades performance.
        (e) Variational problem synthesis offers a promising strategy to further improve model generalization on hard examples.
    }
    \label{fig:ablations}
    \vspace{-1mm}
\end{figure*}

\section{Results}
\label{sec:results}


\cref{fig:main_results} reports accuracies for different models finetuned on the original and upsampled datasets across the selected math benchmarks.
Using Pythia-70M as the proxy model, we identify approximately $38\%$ of the training examples as hard, which are then selected for upsampling.
Consistent with our theoretical findings, finetuning \textit{with} targeted upsampling yields consistent performance improvements across all models, compared to finetuning on the original dataset, with gains of up to $\percentimpr{17.3}{20.5}\%.$ 
Moreover, targeted upsampling is largely optimizer-agnostic, producing performance gains under both AdamW and Muon. 
%
Detailed, dataset-level results are in \cref{tab:adamw-math-upsampling,tab:adamw-upsampling-variants} of Appendix~\ref{app:aux_results}.

Additionally, as can be seen in \cref{fig:main_results}, lower-performing models, those often limited in capacity to learn from complex datasets, show the largest gains, suggesting that targeted upsampling amplifies learning where it is most needed, without requiring larger models or additional \textit{new} data.
%
%


\subsection{Ablations}

To better understand what factors contribute to the observed performance gains with targeted upsampling, we conduct a series of controlled ablations, shown in \cref{fig:ablations}.



\textbf{Choice of the Proxy Model.}
Using a small proxy model (Pythia-70M) across all target models offers a compute-efficient way of collecting loss trajectories for identifying hard examples.
However, one might expect that using the target model itself to compute loss trajectories could be advantageous, as it more directly captures the model's inductive biases.
We examine this tradeoff by comparing the performance of Phi-2.7B finetuned with upsampling based on loss trajectories derived from Pythia-70M against those derived from Phi-2.7B itself.
As shown in \cref{fig:ablations}(a), performance is comparable in both cases, despite Pythia-70M upsampling $\percentdecr{138806}{99273}\%$ \textit{fewer} examples than Phi2-2.7B.
%
Effectively, Pythia-70M learns easy features faster than Phi2-2.7B, due to its smaller size. This results in a larger cluster of easy examples. Nevertheless, upsampling the smaller more difficult cluster is sufficient for improving the generalization performance, confirming our theoretical results and the effectiveness of smaller proxy model.



\textbf{Clustering: Loss Trajectory vs. Checkpoint.}
Next, we compare using per-example loss trajectories computed over $5$ proxy checkpoints (corresponding approximately to $1/3$ epoch of finetuning) against using losses from only the final ($5$th) checkpoint.
As shown in \cref{fig:ablations}(b), using the full loss trajectories outperforms the last checkpoint losses.
This suggests that the loss signals from \textit{early} training stages better capture example difficulty, than a single, late-stage checkpoint.




\textbf{Loss Trajectory Length.}
Relatedly, we investigate how the \textit{number} of proxy checkpoints used in constructing loss trajectories affects performance.
Specifically, we compare upsampling based on loss trajectories computed from $5, 10,$ and $15$ checkpoints, corresponding to approximately $1/3, 2/3,$ and one full epoch of proxy finetuning, respectively.
%
Although, the number of hard examples identified varies across these settings (ranging from $38$--$44\%$ of the dataset), \cref{fig:ablations}(c) shows no consistent performance trend as the loss trajectory length increases.
That said, the results using first $5$ checkpoints further corroborate our earlier observation that loss signals from early training stages better capturing example difficulty.



\textbf{Upsampling Easy vs Difficult vs Random Examples.}
To determine whether the observed gains stem from \textit{additional} training steps induced by upsampling, we evaluate performance 
%
when finetuning models on datasets upsampled with the same number of easy, random, or hard examples. 
As shown in \cref{fig:ablations}(d), random upsampling yields performance comparable to finetuning without upsampling, confirming that the improved generalization is not due to a larger number of training steps.
%
In contrast, (unsurprisingly,) upsampling easy examples downgrades model performance, while upsampling hard examples showed the strongest generalization.
This further reinforce our theoretical findings that upsampling of hard examples promotes uniform feature learning, and 
boosts generalization.



\textbf{Variational Problem Synthesis.}
Finally, we investigate whether upsampling can benefit from \textit{rephrasing} hard examples, rather than, or in addition to, simply duplicating them.
To this end, we consider synthetic variants of the hard examples, generated using their original solutions.
As shown in \cref{fig:ablations}(e), while synthetic examples improve performance, they still fall short of the gains achieved by simple duplication.
Further analysis revealed that only about $50\%$ of the synthetic generations were clustered as being hard.
Due to our compute constraints, we did not pursue additional prompt tuning or rejection sampling.
Nevertheless, we positively report that combining synthetic examples with duplication yields stronger gains than either approach alone, making variation problem synthesis a promising direction for future study.

\section{Conclusion}
\label{sec:concl}
In this work, we theoretically analyzed an in-context linear regression model with multi-head linear self-attention, when trained with gradient descent and Sharpness-Aware-Minimization (SAM). We identified, for the first time, that lower simplicity bias of SAM is a key contributor to its superior generalization performance. While SAM is computationally prohibitive for LLMs, we demonstrated that upsampling or augmenting examples learned later in training reduce the simplicity bias during training and yields improved generalization performance. Extensive experiments across multiple LLM architectures and optimizers confirm that this strategy consistently improves mathematical reasoning performance. Together, our results highlight data distribution as a powerful and practical lever for shaping optimization dynamics and improving generalization in large-scale language model training.
%



\section*{Impact Statement}

This paper presents work whose goal is to advance the field of Machine
Learning. There are many potential societal consequences of our work, none
which we feel must be specifically highlighted here.


\bibliography{references}
\bibliographystyle{conf/iclr2026_conference}


\newpage\onecolumn

\begin{appendices}
    \section{Equivalence between Stylized Attention and Neural Networks} \label{app:equivalence}

\subsection{Proof Techniques~\citep{zhang2024trained}} \label{subsec:proof_of_equivalence}
In this section, we briefly discuss the proof techniques from~\citep{zhang2024trained} and reiterate some results regarding the reduction of multi-head self-attention models into neural networks. First, recall the separate key-query configuration: 
\[
    \texttt{ATTN}_S(\mX)   = \mX + \sum_{i=1}^H\frac{1}{N}\mW_i^V\mX\mX^\top{\mW_i^{K}}^\top\mW_i^{Q}\mX, \quad \text{with} \ \ \mW_i^V=\begin{bmatrix} 
            * & *\\ 
            \vv_i & v_i
        \end{bmatrix}, \ 
        \mW_i^K=\begin{bmatrix} \vk_i^\top & k_i \end{bmatrix}, \ 
        \mW_i^Q=\begin{bmatrix} \vq_i^\top & q_i \end{bmatrix}. 
\]
Here (*) denotes sub-matrices with appropriate dimensions whose values do not matter at all. Given $\hat{y}_q = \texttt{ATTN}_S(\mX)_{d+1, N+1}$ the predicted response for the query and the loss $\gL= \E[(y_q-\hat{y}_q)^2]$, Appendix F.1 of~\citep{zhang2024context} justifies the choice of zero blocks by showing that the gradients w.r.t $k_i$ and $\vv_i$ equal $0$, meaning that they remain at initializations throughout training. Hence, we directly set the initialization to be $0$ for simplicity.

Now we can plug in the weights and it follows that: 
\begin{align*}
    \texttt{ATTN}_S(\mX)  & = \mX + \sum_{i=1}^H\frac{1}{N}\mW_i^V\mX\mX^\top{\mW_i^{K}}^\top\mW_i^{Q}\mX \\
    & = \mX + \frac{1}{N}\sum_{i=1}^H\begin{bmatrix} 
            * & *\\ 
            \vzero & v_i
        \end{bmatrix}
        \begin{bmatrix} 
            \sum_{i=1}^N\vx_i\vx_i^\top + \vx_q \vx_q^\top& \sum_{i=1}^Ny_i\vx_i\\ 
            \sum_{i=1}^Ny_i\vx_i^\top & \sum_{i=1}^Ny_i^2
        \end{bmatrix}\begin{bmatrix}
            \vk_i \\ 0
        \end{bmatrix}\begin{bmatrix} \vq_i^\top & * \end{bmatrix}\mX \\ 
        & = \mX + \frac{1}{N}\sum_{i=1}^H\begin{bmatrix} 
            * & *\\ 
            v_i\sum_{i=1}^Ny_i\vx_i^\top & v_i\sum_{i=1}^Ny_i^2
        \end{bmatrix}
        \begin{bmatrix}
            \vk_i\vq_i^\top & * \\
            \vzero & 0
        \end{bmatrix}\mX \\
        &= \begin{bmatrix}
     \vx_1&  \vx_2& \dots&  \vx_N & \vx_q \\
      y_1& y_2&  \dots&  y_N & 0 
    \end{bmatrix} + \frac{1}{N}\sum_{i=1}^H\begin{bmatrix} 
            * & *\\ 
            v_i\sum_{i=1}^Ny_i\vx_i^\top\vk_i\vq_i^\top & *
        \end{bmatrix}
    \begin{bmatrix}
     \vx_1&  \vx_2& \dots&  \vx_N & \vx_q \\
      y_1& y_2&  \dots&  y_N & 0 
    \end{bmatrix}.
\end{align*}
We take the bottom right entry and have: 
\begin{align*}
    \hat{y}_q = \texttt{ATTN}_S(\mX)_{d+1, N+1} = 0 + \frac{1}{N}\sum_{i=1}^H\sum_{i=1}^Ny_i\vx_i^\top v_i\vk_i\vq_i^\top\vx_q. 
\end{align*}
Let $\texttt{vec}(\cdot)$ denote the column-wise flattening of a matrix into a column vector. Define the following cubic map of the input $\mX$, and algebra shows that: 
\begin{equation} \label{eq:z}
    \vz = \vz(\mX) = \texttt{vec}\left(\frac{1}{N}\sum_{i=1}^Ny_i\vx_i\vx_q^\top\right) \in \mathbb{R}^{d^2} \implies \hat{y}_q = \sum_{i = 1}^Hv_i\vq_i^\top\mK_i\vz, \quad \text{with } \ \ 
        \mK_i = \begin{bmatrix}
            \vk_i^\top  & \vzero_d^\top  & \dots & \vzero_d^\top \\
            \vzero_d^\top & \vk_i^\top & \dots & \vzero_d^\top \\ 
            \vdots & \vdots & \ddots & \vdots \\ 
            \vzero_d^\top & \vzero_d^\top & \dots & \vk_i^\top \\ 
        \end{bmatrix} \in \mathbb{R}^{d \times d^2}. 
\end{equation}
It is also common in theory to consider a simpler model, where the key and query matrices are merged into one single matrix that aims to simplify the analysis further: 
\[
    \texttt{ATTN}_M(\mX)  = \mX + \sum_{i=1}^H\frac{1}{N}\mW_i^V\mX\mX^\top{\mW_i^{KQ}}\mW_i^{Q}\mX, \quad \text{where} \ \ \mW_i^V=\begin{bmatrix} 
            * & *\\ 
            \vv_i & v_i
        \end{bmatrix}, \ \ \mW_i^{KQ}=\begin{bmatrix} 
            \mU_i & *\\ 
            \vu_i^\top & *
        \end{bmatrix}.
\]
While we focus on the separate configuration, we provide some discussion in the appendices. First, by repeating the similar analysis as above, ~\citep{zhang2025training} show that we can initialize $\vv_i, \vu_i = \vzero$ and subsequently, some algebra shows that
\[
    \texttt{ATTN}_M(\mX)_{d+1, N+1} = \vw_2^\top\mW_1\vz, \quad \text{with} \ \ \vw_2 = \begin{bmatrix}
        v_1 \\ v_2 \\ \vdots \\ v_H
    \end{bmatrix} \in \mathbb{R}^H, \ \ \mW_1 = \begin{bmatrix}
        \texttt{vec}(\mU_1) \\ \texttt{vec}(\mU_2) \\ \vdots \\ \texttt{vec}(\mU_H)
    \end{bmatrix}, 
\]
which is a two-layer fully-connected linear neural network. This makes sense intuitively, as merging key and query together loses information and can no longer admit a third layer. In the following section, we provide the manifolds of fixed points in the training dynamics of each. 

\subsection{Manifolds of Fixed Points} \label{subsec:manifolds}

We directly reiterate the conclusions from~\cite{zhang2025training}, in particular Section 3.2 and Appendix E.3, with slightly modifed notations. 

 For $\texttt{ATTN}_M(\mX)_{d+1, N+1}$, there are two manifolds of fixed points, corresponding to vanishing initialization and final learning: 
\[
    \gM_0 = \{\vw_2 = \bm{0}, \mW_1 = \bm{0}\}, \quad \gM_{*} = \left\{\vw_2, \mW_1 : \vw_2^\top\mW_1 = \mathbb{E}\left[y_q\vz^\top\right]\mathbb{E}\left[\vz\vz^\top\right]^{-1}\right\}. 
\]
Here learning is achieved when the model weights move from small initializations (around the zero saddle) to the other stable manifold, resulting in a sharp drop in training loss. 

For $\texttt{ATTN}_S(\mX)_{d+1, N+1}$ instead, there are $2^d$ manifolds of fixed points. Let the covariance $\mSigma = \sum_{i=1}^d\lambda_i\ve_i\ve_i^\top$ ordered by the eigenvalues. A manifold exists for each subset of learned eigenvectors with indices $\gS_m \subseteq \{1,2, \dots, d\}\ \text{ where } m = 0, \dots, d, \ \ |\gS_m| = m$, and the manifold of fixed points for this set is given by: 
\begin{align*}
    \gM(\gS_m) = & \biggl\{v_i, \vq_i, \mK_i : i = 1, \dots, H; \ \text{such that the following three conditions hold: }\\
    & \ \text{(1)}  \sum_{i=1}^H v_i \vk_i \vq_i^\top = \sum_{j \in \gS_m}\left(\lambda_j + \frac{\lambda_j + \Tr(\mSigma)}{N}\right)^{-1}\ve_j\ve_j^\top;\\
    & \ \text{(2)}\  v_i \neq 0 \implies \text{both } \  \vk_i, \ \vq_i \in \text{Span} \left\{\ve_j\right\}_{j \in \gS_m}; \\
    & \ \text{(3)}\  v_i = 0 \implies \text{at least one of } \  \vk_i, \ \vq_i \in \text{Span} \left\{\ve_j\right\}_{j \in \gS_m} \biggl\}. 
\end{align*}
The precise derivation of these conditions come from an equality constrained optimization problem. However, the key takeaway here is that the ansatz (\ref{eq:ansatz1})-(\ref{eq:ansatz3}) satisfies the above conditions and serves as a reasonable educated guess to facilitate analyzing the highly nonlinear training dynamics. Here learning is done progressively by learning one eigenvector at a time, in the order of decreasing eigenvalues. We observe saddle-to-saddle learning dynamics, where the model moves from a saddle in $\gM(\text{top $i$ eigenvector indices})$ to a saddle in $\gM(\text{top $i+1$ eigenvector indices})$. 

\textbf{Remark. } The latter observation of progressively learning the eigenvalues from largest to smallest reflects the idea of simplicity bias, i.e., easy (more obvious) features are learned before difficult (less intuitive) ones. However, a practical concern is that if the model gets stuck at one saddle point for too long, it might reach convergence before getting to learn more complex features. As a result, reducing simplicity bias can help the model ``escape" earlier and explore more.
    \section{Preliminary Investigation into Merged Key-Query Matrix}

In this part of the appendix, we provide an attempt to study the simpler setting of merged key-query configuration $\texttt{ATTN}_M$ with a SAM variant suing similar techniques. From Appendix \ref{subsec:manifolds}, this model has two manifolds of fixed points, one around small initializations, and escaping into the other one corresponds to the drop in training loss. We establish early training ODEs and insights on the dynamics, similar to the $\texttt{ATTN}_S$ results. 

\subsection{Necessary Gradients \& Hessians} \label{subsec:merged_grad_hess}
Recall that we study the expected squared loss for the predicted label of a new query $\gL = \mathbb{E}[(y_q - \hat{y}_q)^2]$. In particular, for joint key-query matrix, $y_q$ is the output from a fully connected two-layer neural network, and the loss becomes: 
\[
    \gL = \mathbb{E}\left[\left(y_q - \vw_2^\top\mW_1\vz\right)^2\right]. 
\]
This gives us the following gradient and Hessian with respect to $\vw_2$ and $\mW1$: 
\begin{align*}
    & \frac{\partial \gL}{\partial \vw_2} =\nabla_{\vw_2}\gL = -2\mathbb{E}\left[\left(y_q - \vw_2^\top\mW_1\vz\right)\left(\mW_1\vz\right)\right] = -2\mW_1\left(\mathbb{E}\left[y_q\vz^\top \right] - \vw_2^\top\mW_1\mathbb{E}\left[\vz\vz^\top\right]\right)^\top. \\
    & \mH(\vw_2) =\nabla_{\vw_2}^2\gL  = 2\mW_1\mathbb{E}\left[\vz\vz^\top\right]\mW_1^\top.  \\ 
    & \frac{\partial \gL}{\partial \mW_1} =\nabla_{\mW_1}\gL = -2\mathbb{E}\left[\left(y_q - \vw_2^\top\mW_1\vz\right)\left(\vw_2\vz^\top\right)\right] = -2\vw_2\left(\mathbb{E}\left[y_q\vz^\top \right] - \vw_2^\top\mW_1\mathbb{E}\left[\vz\vz^\top\right]\right).  
\end{align*}
For Hessian matrix, it makes more sense to consider $\vw_1 = \texttt{vec}(\mW_1) \in \mathbb{R}^{d^2}$ flattened into a vector. Let $\otimes$ denote the Kronecker matrix product. We then have
\begin{align*}
    \frac{\partial \gL}{\partial \texttt{vec}(\mW_1)} =\nabla_{\vw_1}\gL & = -2\texttt{vec}\left(\vw_2\mathbb{E}\left[y_q\vz^\top \right]\right) - \texttt{vec}\left(\vw_2\vw_2^\top\mW_1\mathbb{E}\left[\vz\vz^\top\right]\right) \\ 
    & =  -2\texttt{vec}\left(\vw_2\mathbb{E}\left[y_q\vz^\top \right]\right) -2\left(\mathbb{E}\left[\vz\vz^\top\right] \otimes (\vw_2\vw_2^\top)\right)\texttt{vec}\left(\mW_1\right) \\ 
    & = -2\texttt{vec}\left(\vw_2\mathbb{E}\left[y_q\vz^\top \right]\right) -2\left(\mathbb{E}\left[\vz\vz^\top\right] \otimes (\vw_2\vw_2^\top)\right)\vw_1. \\  
    \mH(\texttt{vec}(\mW_1)) = \nabla_{\vw_1}^2\gL &= 2\left(\mathbb{E}\left[\vz\vz^\top\right] \otimes (\vw_2\vw_2^\top)\right). 
\end{align*}

To simplify the analysis, we study the merged key-query configuration under layer-wise SAM defined below. We argue in the subsequent Section \ref{subsec:proof_for_merged_white_cov} that it derives similar insights. 

\begin{definition} [Global vs. layer-wise SAM regimes] For model weights $\vw= (\vw_1, \vw_2, \dots, \vw_k)$ of $k$ groups/layers, we consider the following two SAM regimes: 
\begin{align*}
     &\textbf{Global SAM: } \text{All the parameters are updated at once via Equation \ref{eq:GD_and_SAM_updates}. }  \\
    &\textbf{Layer-wise SAM: } \text{Each layer of parameters is updated separately via $k$ SAMs: } \ \ \tau\dot{\vw_i} = -\frac{1}{2}\nabla_{\vw_i}\gL\left(\vw_i +\frac{\rho \nabla_{\vw_i}\gL}{\|\nabla_{\vw_i}\gL\|_2}\right). 
\end{align*} 
\end{definition}
Note that in the main paper, we consider $\texttt{ATTN}_S$ with the practical global SAM (More precisely, since the ansatz guarantees that only the current active head has dynamics, the other heads remain unchanged and do not contribute any gradient. Hence, layer-wise SAM on the active head/layer is equivalent to global SAM). 

\subsection{Proof of Lemma \ref{lem:static_saddle_points} and Stability of Saddle Landscape}
Under the assumption that $\sigma_{\text{min}}\left(\mI + \mP\right) > \bm{0}$ during gradient flow i.e. the perturbation never cancels out the raw gradient, the lemma statement is trivial from the approximation \ref{eq:SAM_first_order}: 
\[
    \tau\dot{\mW}_{\text{SAM}} \approx (\mI + \mP)\tau\dot{\mW}_{GD}, 
\]
where $\mI + \mP$ is full rank. Theoretically, this allows us to isolate the observation of reduced SB to the SAM optimizer, instead of distribution of saddles or significant change of loss landscape.

\subsection{Training Dynamics ODE for White Covariance} \label{subsec:training_dynamics_ODE_for_white} \label{subsec:proof_for_merged_white_cov}

A similar conservation law (ansatz) among the weights can be established for the two-layer fully connected neural network equivalence $\texttt{ATTN}_M$ (see~\citep{fukumizu1998effect, du2018algorithmic, saxe2019mathematical, ji2018gradient, zhang2025training}) that derives from vanishing initializations: 
\[
    \vw_2\vw_2^\top - \mW_1\mW_1^\top \approx \bm{0} \implies \|\vw_2\|_2 \approx \|\mW_1\|_F = \|\texttt{vec}(\mW_1)\|_2. 
\]
Suppose that the perturbation radius $\rho$ is sufficiently small in the sense that it preserves this conservation law. Since the two layers of weights maintain equal norms and hence evolve similarly, under the additional ansatz that their gradient norms $\|\nabla_{\vw_2}\gL\|_2$ and $\|\nabla_{\mW_1}\gL\|_F$ are of the same order, global SAM update now depends on the perturbation: let $\vw = (\texttt{vec}(\mW_1)^\top, \vw_2^\top)^\top = (\vw_1^\top, \vw_2^\top)^\top$, 
\begin{align*}
    \nabla_{\vw}\gL\left(\vw + \rho\frac{\nabla_{\vw}\gL(\vw)}{\|\nabla_{\vw}\gL(\vw)\|_2}\right) & = \nabla_{\vw}\gL\left(\begin{bmatrix}
        \vw_1 \\ \vw_2
    \end{bmatrix} + \frac{\rho}{\sqrt{\|\nabla_{\vw_1}\gL\|_2^2 + \|\nabla_{\vw_2}\gL\|_2^2}}\begin{bmatrix}
         \nabla_{\vw_1}\gL \\ \nabla_{\vw_2}\gL
    \end{bmatrix}\right) \\
    & = \nabla_{\vw}\gL\left(\begin{bmatrix}
        \vw_1 \\ \vw_2
    \end{bmatrix} + \frac{\rho}{\sqrt{C\|\nabla_{\vw_2}\gL\|_2^2 + \|\nabla_{\vw_2}\gL\|_2^2}}\begin{bmatrix}
         \nabla_{\vw_1}\gL \\ \nabla_{\vw_2}\gL
    \end{bmatrix}\right) \ \ \text{for constant $C$}. 
\end{align*}
We focus on the update on $\vw_2$: 
\[
   \nabla_{\vw_2}\gL\left(\vw_2 + \frac{\rho}{\sqrt{(C+1)\|\nabla_{\vw_2}\gL\|_2^2}}\nabla_{\vw_2}\gL\right) = \nabla_{\vw_2}\gL\left(\vw_2 + \frac{\hat{\rho}\nabla_{\vw_2}\gL}{\|\nabla_{\vw_2}\gL\|_2}\vw_2\right), \ \text{where } \hat{\rho} = \frac{\rho}{\sqrt{C+1}}. 
\]
A similar argument holds for $\vw_1$. Hence, layer-wise SAM becomes similar to global SAM up to a scaled parameter $\hat{\rho}$. This would partially justify how layer-wise SAM can potentially yield similar insights.

We now focus on the setting of white covariance i.e. $\mathbb{E}[\vz\vz^\top] = \alpha \mI_{d^2}$; recall that $\vz$ (Eq. \ref{eq:z}) is a cubic feature map on the input $\mX$. Similarly, due to the weight conservation ansatz, the evolution of $\vw_2$ captures global dynamics. 

\begin{theorem}[SAM accelerates training for merged KQ] \label{thm:merged_SAM_ODE} Under sufficiently small initializations for $\texttt{ATTN}_M$, suppose that SAM perturbation preserves the weight conservation $\vw_2\vw_2^\top - \mW_1\mW_1^\top \approx 0$ and can be well approximated by the first-order expansion \ref{eq:SAM_first_order}. Let $s = \vw_2^\top\vw_2 = \|\mW_1\|_F^2$, $\gamma = \|\mathbb{E}[y_q\vz]\|_2$, and $\mathbb{E}[\vz\vz^\top] = \alpha \mI$ i.e. white input data covariance. Then the evolution of the weights $s(t)$ when we perform layer-wise SAM on $\vw_2$ satisfies: 
\begin{align*}
   &\frac{t}{\tau} =\frac{1}{2\gamma}\ln\frac{s(t)}{s(0)} +  \frac{1}{\alpha\sqrt{\rho^2 + \frac{4\gamma}{\alpha}}}\left( \frac{1}{r_-}\ln\frac{\sqrt{s(t)}-r_-}{\sqrt{s(0)}-r_-}  -\frac{1}{r_+}\ln\frac{\sqrt{s(t)}-r_+}{\sqrt{s(0)}-r_+}\right) 
   , \ \ \text{with} \  \ r_\pm = \frac{\rho \pm \sqrt{\rho^2 + \frac{4\gamma}{\alpha}}}{2}.
\end{align*}
In particular, this equation is the implicit closed-form solution to the following SAM dynamics (when $\rho = 0$, the first term recovers GD only dynamics in \cite{zhang2025training}): 
\[
    \tau \dot{s} = 2s(\gamma - \alpha s)  + 2\rho\alpha s^{3/2}, \quad \text{where } \ \gamma - \alpha s > 0.
\]
By convention, we let $\tau \dot{s} = 0$ when $s = \gamma/\alpha$ i.e. the weight increases and reaches equilibrium. 
\end{theorem}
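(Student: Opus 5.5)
Our plan has two parts: first derive the scalar ODE for $s$ from the first-order SAM expansion (\ref{eq:SAM_first_order}) restricted to $\vw_2$, then integrate it by partial fractions.

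\textbf{Deriving the ODE.} For the GD part, Section \ref{subsec:merged_grad_hess} gives $-\tfrac12\nabla_{\vw_2}\gL = \mW_1(\E[y_q\vz] - \alpha\mW_1^\top\vw_2)$, using $\E[\vz\vz^\top]=\alpha\mI$. The conservation ansatz $\vw_2\vw_2^\top \approx \mW_1\mW_1^\top$ and small initialization imply that $\mW_1$ is effectively rank one, $\mW_1 \approx \vw_2\vu^\top$ with $\|\vu\|=1$. Along the GD flow $\vu$ aligns with $\E[y_q\vz]/\gamma$. Substituting gives $\tau\dot{\vw}_2 = \vw_2(\gamma - \alpha s)$, hence $\tau\dot s = 2\vw_2^\top\tau\dot{\vw}_2 = 2s(\gamma-\alpha s)$, which recovers the GD case of \cite{zhang2025training}.

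For the SAM correction on $\vw_2$, we use $\mP = \rho\,\mH(\vw_2)/\|\nabla_{\vw_2}\gL\|$ with $\mH(\vw_2) = 2\alpha\mW_1\mW_1^\top \approx 2\alpha\vw_2\vw_2^\top$. The gradient $\nabla_{\vw_2}\gL$ is parallel to $\vw_2$ with norm $2\sqrt{s}\,(\gamma-\alpha s)$; this is where the hypothesis $\gamma-\alpha s>0$ is used. Therefore
\[
\mP\nabla_{\vw_2}\gL = \frac{\rho}{\|\nabla\gL\|}\,2\alpha s\,\nabla_{\vw_2}\gL ,
\]
so the extra term in $\tau\dot{\vw}_2$ is $\rho\alpha s\,\vw_2/\sqrt{s}$ up to sign. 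Multiplying by $2\vw_2^\top$ then produces a correction $\pm 2\rho\alpha s^{3/2}$. The statement requires a $+$ sign, which corresponds to acceleration. The main obstacle is justifying this sign together with the factor convention: the expansion (\ref{eq:SAM_first_order}) literally gives $(\mI+\mP)$ acting on the descent direction. I would track the signs carefully through (\ref{eq:SAM_GD_dynamics}) and, if needed, state that the factor of $\tfrac12$ and the sign are absorbed into $\rho$ under the paper's conventions.

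\textbf{Integrating the ODE.} Set $u=\sqrt{s}$, so the equation becomes
\[
\tau\dot s = 2s\bigl(\gamma - \alpha s + \rho\alpha\sqrt{s}\bigr) = -2\alpha s\,(u-r_+)(u-r_-),
\]
where $r_\pm$ are the roots of $u^2-\rho u-\gamma/\alpha=0$. This is exactly the stated $r_\pm = \bigl(\rho\pm\sqrt{\rho^2+4\gamma/\alpha}\bigr)/2$, with $r_+-r_- = \sqrt{\rho^2+4\gamma/\alpha}$. Separating variables gives
\[
\frac{dt}{\tau} = \frac{ds}{2s(\gamma-\alpha s+\rho\alpha\sqrt{s})}.
\]
With $ds = 2u\,du$ this becomes
\[
\frac{du}{u\,(\gamma-\alpha u^2+\rho\alpha u)} = \frac{du}{-\alpha\,u\,(u-r_+)(u-r_-)}.
\]
We decompose into partial fractions. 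The residue at $u=0$ is $1/\gamma$ because $r_+r_- = -\gamma/\alpha$, and integrating it yields $\frac{1}{2\gamma}\ln\frac{s(t)}{s(0)}$. The residues at $r_\pm$ are $\mp\frac{1}{\alpha(r_+-r_-)r_\pm}$, which produce the two logarithmic terms with prefactor $\frac{1}{\alpha\sqrt{\rho^2+4\gamma/\alpha}}$.

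We integrate from $s(0)$ to $s(t)$ while the right-hand side stays positive, so that the logarithm arguments keep a constant sign; absolute values are understood as needed. At the equilibrium the flow is stopped by convention. Setting $\rho=0$ gives $r_\pm=\pm\sqrt{\gamma/\alpha}$, the last two terms combine into the standard logistic solution, and $\tau\dot s = 2s(\gamma-\alpha s)$ is recovered as a sanity check.
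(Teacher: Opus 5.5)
Your plan matches the paper's proof essentially step for step: the rank-one reduction $\mW_1=\vw_2\vm^\top$ from the conservation law, the Hessian block $2\alpha\vw_2\vw_2^\top$ giving $\mP=\rho\alpha\vw_2\vw_2^\top/(\sqrt{s}\,|\gamma-\alpha s|)$, and applying $(\mI+\mP)$ to the GD velocity. The integration is also the same: the substitution $u=\sqrt{s}$ and the partial-fraction residues $1/\gamma$ and $\mp 1/(\alpha(r_+-r_-)r_\pm)$.

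The sign you hedge on is not actually open, so drop the fallback. Because $\mP$ is a positive multiple of $\vw_2\vw_2^\top$ and $(\mI+\mP)$ commutes with the scalar $-\tfrac12$, it simply multiplies the GD velocity $\vw_2(\gamma-\alpha s)$ by $1+\rho\alpha\sqrt{s}/(\gamma-\alpha s)>1$. The correction is therefore exactly $+2\rho\alpha s^{3/2}$, with the factors of $2$ cancelling. No absorption of a sign or factor into $\rho$ is needed, and such an absorption would not be legitimate for a fixed $\rho>0$.
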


Note that the dynamics has an unstable equilibrium at $s = 0$ and a stable one at $s = \gamma/\alpha$. In the continuous flow, small initializations start near the former and move to the latter via this ODE, where the training loss sharply decreases. In particular, the SAM dynamics implements a boosting term $2\rho\alpha s^{3/2}$ to allow the weights to evolve faster. 

\begin{proof}
According to~\citep{saxe2013exact}, the conservation law implies: 
\begin{equation} \label{eq:change_of_variable}
    \mW_1 = \vw_2\frac{\mathbb{E}[y_q\vz^\top]}{\|\mathbb{E}[y_q\vz]\|_2} = \vw_2\vm^\top, \quad \text{where } \vm =  \frac{\mathbb{E}[y_q\vz^\top]}{\|\mathbb{E}[y_q\vz]\|_2} \text{ is a unit vector.}
\end{equation}
which gives rise to the following ODE that describes the GD dynamics in \cite{zhang2025training}: 
\[
    \tau(\dot{\vw}_{2})_\text{GD} = \vw_2(\|\mathbb{E}[y_q\vz]\|_2 - \alpha\vw_2^\top\vw_2) = -\frac{1}{2}\nabla_{\vw_2}\gL. 
\]
With the existing theory, we are now ready to derive the approximate SAM dynamics. We start with the perturbation matrix $\mP$ using formulas in Appendix \ref{subsec:training_dynamics_ODE_for_white}: 
\begin{align*}
    \mP = \frac{\rho}{\|\nabla_{\vw_2}\gL\|_2}\mH(\vw_2) & = \frac{\rho}{2\left\Vert\mW_1\left(\mathbb{E}\left[y_q\vz^\top \right] - \vw_2^\top\mW_1\mathbb{E}\left[\vz\vz^\top\right]\right)^\top\right\Vert_2}\left(2\mW_1\mathbb{E}\left[\vz\vz^\top\right]\mW_1^\top\right) \\ 
    & = \frac{\rho}{\left\Vert\vw_2\vm^\top\left(\mathbb{E}\left[y_q\vz^\top \right] - \alpha\vw_2^\top\vw_2\vm^\top\right)^\top\right\Vert_2}\alpha\left(\vw_2\vm^\top\vm\vw_2^\top\right) \\ 
    & = \frac{\rho\alpha\vw_2\vw_2^\top}{\left\Vert\vw_2\right\Vert_2\left|\|\mathbb{E}[y_q\vz]\|_2 - \alpha\vw_2^\top\vw_2\right|} \quad \text{by } \ \vm = \frac{\mathbb{E}[y_q\vz^\top]}{\|\mathbb{E}[y_q\vz]\|_2}.  
\end{align*}

Subsequently, we can approximate the SAM dynamics via: 
\begin{align*}
     \tau(\dot{\vw}_{2})_\text{SAM} = (\mI + \mP)\tau(\dot{\vw}_{2})_\text{GD} = (\mI + \mP)\vw_2(\|\mathbb{E}[y_q\vz]\| - \alpha\vw_2^\top\vw_2). 
\end{align*}

To construct a solvable one-variable ODE, we consider $s = \vw_2^\top\vw_2$, which measures the evolution of weights via the norm, and let $\gamma = \|\mathbb{E}\left[y_q\vz \right]\|_2$: 
\begin{align*}
     \tau(\dot{s})_\text{SAM} = 2\vw_2^\top\tau(\dot{\vw}_{2})_\text{SAM} & = 2\vw_2^\top(\mI + \mP)\vw_2(\|\mathbb{E}[y_q\vz]\|_2 - \alpha\vw_2^\top\vw_2) \\
     & = 2\vw_2^\top\left(\mI + \frac{\rho\alpha\vw_2\vw_2^\top}{\left\Vert\vw_2\right\Vert_2\left|\gamma - \alpha\vw_2^\top\vw_2\right|}\right)\vw_2(\gamma - \alpha\vw_2^\top\vw_2) \\ 
     & = 2s(\gamma - \alpha s) + 2\rho\alpha\left(\frac{s^2} {\sqrt{s}\left|\gamma - \alpha s\right|}\right)(\gamma - \alpha s) \\
     & = 2s(\gamma - \alpha s)  + 2\text{sign}(\gamma - \alpha s)\rho\alpha s^{3/2}, 
\end{align*}
where by convention, $\text{sign}(x) = 0$ if $x = 0$, $= - 1$ if $x < 0$, and $ = 1$ if $x > 0$. 

Suppose the initializations are sufficiently small in the sense that we start with $\gamma - \alpha s > 0$, where the dynamics $\tau(\dot{s})_\text{SAM}$ is ``positive". Note that $s = \gamma/\alpha$ is a stable equilibrium. Therefore, $s$ will increase from initialization until it reaches this equilibrium and stays here. It never reaches the ``negative" part, so we can simplify to the following ODE for SAM: 
\[
    \tau \frac{ds}{dt} = 2s(\gamma - \alpha s)  + 2\rho\alpha s^{3/2}, \quad \text{where } \ \gamma - \alpha s > 0. 
\]
Lemma \ref{lem:ODE_closed_form_for_white_covariance} shows the closed-form solution for the readers' reference: 
\[
    \frac{1}{2\gamma}\ln\frac{s(t)}{s(0)} +\frac{1}{\alpha\Delta}\left( \frac{1}{r_-}\ln\frac{\sqrt{s(t)}-r_-}{\sqrt{s(0)}-r_-} -\frac{1}{r_+}\ln\frac{\sqrt{s(t)}-r_+}{\sqrt{s(0)}-r_+}\right) = \frac{t}{\tau}, \ \ r_\pm = \frac{\rho \pm \sqrt{\rho^2 + \frac{4\gamma}{\alpha}}}{2}.
\]
Some algebra verifies that $\rho = 0$ ($r_+ = \sqrt{\gamma/\alpha}, \ r_- = - r_+, \ \Delta = 2r_+$) yields: 
\begin{align*}
     & \frac{1}{2\gamma}\ln\frac{s(t)}{s(0)} +\frac{1}{2\alpha r_+}\left( -\frac{1}{r_+}\ln\frac{\sqrt{s(t)}+r_+}{\sqrt{s(0)}+r_+} -\frac{1}{r_+}\ln\frac{\sqrt{s(t)}-r_+}{\sqrt{s(0)}-r_+}\right)\\ 
     = \ &  \frac{1}{2\gamma}\ln\frac{s(t)}{s(0)} -\frac{1}{2\alpha r_+^2}\left(\ln\frac{s(t)-r_+^2}{s(0)-r_+^2}\right) \\ 
     = \ &  \frac{1}{2\gamma}\ln\frac{s(t)}{s(0)} -\frac{1}{2\gamma}\left(\ln\frac{s(t)-\gamma/\alpha}{s(0)-\gamma/\alpha}\right) \\ 
     = \ & \frac{1}{2\gamma}\ln\frac{s(t)(s(0)-\gamma/\alpha)}{s(0)(s(t)-\gamma/\alpha)}  = \frac{t}{\tau}. 
\end{align*}
From the last equation, we can solve: 
\[
    s(t) = \frac{(\gamma/\alpha)s(0)e^{2\gamma \frac{t}{\tau}}}{\gamma/\alpha + s(0)(e^{2\gamma \frac{t}{\tau}} - 1)} = \frac{\gamma e^{2\gamma \frac{t}{\tau}}}{\alpha(e^{2\gamma \frac{t}{\tau}} - 1) + \frac{\gamma}{s(0)}}. 
\]
This recovers the solution for GD dynamics in \cite{zhang2025training}, which has meaningful empirical interpretations. 
\end{proof}

Based on the ODE formula, we see that the SAM dynamics implements an additional boosting term $2\rho\alpha s^{3/2}$, which accelerates learning more strongly as $s$ grows (the relative speed-up scales like $\sqrt{s}$), leading to a faster approach to the GD trajectory’s operating region before saturation near $s=\gamma/\alpha$. 

The variance parameter $\alpha$ plays a dual role: it \emph{reduces} the steady-state level (since the GD carrying capacity is $\gamma/\alpha$) while simultaneously \emph{amplifying} the SAM gain. Intuitively, SAM is more effective at boosting learning when the features $\vz$ are more visible (larger variance $\alpha$).

While the implicit closed-form solution for SAM is less interpretable and thus not very interesting, we can simplify it given a small $s$. This yields an approximate ODE that \textbf{can extend to general covariances} and help quantify how long the model gets stuck around saddle points, as we show in the next section. 

\subsection{Escape Time from Saddles for General Covariance} \label{subsec:escape_time_for_general_covariance}

We assume the \textbf{same time parameterization} for SAM and GD just as a sanity check, and then we compute the time for the model to move from small initialization to a larger early weight. 

\begin{theorem}[Escape Time] \label{thm:escape_time}
    In the same setting as Theorem \ref{thm:merged_SAM_ODE}, except that we now allow general input covariance $\mSigma$, under vanishing initializations $s_0 = s(0)$, we denote the time it takes for SAM and GD to reach a reasonably larger early weight $s_*$ by $t_{\text{SAM}}$, $t_{\text{GD}}$ respectively. Then SAM yields slightly faster ``escape time": 
    \[
    t_{\text{GD}} = \frac{\tau}{\|\mSigma\|_F}\ln \left(\sqrt{\frac{s_*}{s_0}}\right)
    \] 
    \[
        t_{\text{SAM}} = \frac{\tau}{\|\mSigma\|_F}\ln \left(\frac{\sqrt{s_*}(\|\mSigma\|_F + \rho\alpha \sqrt{s_0})}{\sqrt{s_0}(\|\mSigma\|_F+ \rho\alpha \sqrt{s_*})}\right).
    \]
    If $\|\mSigma\|_F \gg \rho\alpha \sqrt{s_*}$, in particular, 
    \[
        \Delta t = t_{\text{GD}} - t_{\text{SAM}} = \frac{\tau\rho\alpha}{\|\mSigma\|_F^2}(\sqrt{s_*} - \sqrt{s_0}). 
    \]
\end{theorem}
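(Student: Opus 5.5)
The approach is to reduce the SAM dynamics of Theorem \ref{thm:merged_SAM_ODE} to its small-$s$ linearization, and then integrate the resulting one-dimensional ODE exactly in the variable $u=\sqrt{s}$. The two escape times come out in closed form, and $\Delta t$ follows from a first-order expansion of a logarithm.

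\textbf{Step 1 (general covariance).} I would redo the derivation of Theorem \ref{thm:merged_SAM_ODE} without the whiteness assumption. The conservation law still gives $\mW_1=\vw_2\vm^\top$ with $\vm=\E[y_q\vz]/\|\E[y_q\vz]\|_2$. Using the Hessian from Appendix \ref{subsec:merged_grad_hess}, the perturbation matrix becomes
\[
\mP=\frac{\rho\,\alpha\,\vw_2\vw_2^\top}{\|\vw_2\|_2\,|\gamma-\alpha s|},
\qquad \alpha:=\vm^\top\E[\vz\vz^\top]\vm,
\]
so $\alpha$ is now a Rayleigh quotient rather than the white-noise variance. The computation of $\tau\dot s=2\vw_2^\top(\mI+\mP)\tau\dot\vw_2$ then goes through verbatim. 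For the data model of Section \ref{sec:prelim}, I would also identify the linear growth rate $\gamma=\|\E[y_q\vz]\|_2$ with $\|\mSigma\|_F$; this is the normalization under which the stated formulas hold.

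\textbf{Step 2 (small-weight regime).} Under vanishing initialization, and while $s\le s_*$ with $s_*$ still an early weight, we have $\alpha s\ll\gamma$. Dropping the $-2\alpha s^2$ term gives
\[
\tau\dot s\approx 2\gamma s+2\rho\alpha s^{3/2}.
\]

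\textbf{Step 3 (integration).} Substituting $u=\sqrt{s}$ turns this into the Bernoulli-type equation $\tau\dot u=\gamma u+\rho\alpha u^2$. Partial fractions give
\[
\frac{dt}{\tau}=\frac{1}{\gamma}\Big(\frac{1}{u}-\frac{\rho\alpha}{\gamma+\rho\alpha u}\Big)\,du .
\]
Integrating from $\sqrt{s_0}$ to $\sqrt{s_*}$ gives
\[
t_{\text{SAM}}=\frac{\tau}{\gamma}\ln\frac{\sqrt{s_*}\,(\gamma+\rho\alpha\sqrt{s_0})}{\sqrt{s_0}\,(\gamma+\rho\alpha\sqrt{s_*})}.
\]
Setting $\rho=0$ recovers $t_{\text{GD}}=\frac{\tau}{\gamma}\ln\sqrt{s_*/s_0}$, which is consistent with the GD limit already checked in Theorem \ref{thm:merged_SAM_ODE}.

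\textbf{Step 4 (comparison).} Subtracting the two times gives
\[
\Delta t=\frac{\tau}{\gamma}\ln\frac{\gamma+\rho\alpha\sqrt{s_*}}{\gamma+\rho\alpha\sqrt{s_0}},
\]
which is positive because $s_*>s_0$. When $\gamma\gg\rho\alpha\sqrt{s_*}$, both $\ln(1+x)$ terms can be replaced by $x$, yielding $\Delta t\approx\frac{\tau\rho\alpha}{\gamma^2}\big(\sqrt{s_*}-\sqrt{s_0}\big)$.

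\textbf{Main obstacle.} The integration in Steps 3--4 is routine; the difficulty lies in Step 1. The exact identification of $\gamma$ with $\|\mSigma\|_F$ depends on the precise form of $\E[y_q\vz]$ under $\vw_\star\sim\mathcal N(\vzero,\mI_d)$, and the relevant Gaussian moment calculation may produce a different norm of $\mSigma$. If it does, I would state the result in terms of $\gamma$ and note that the Frobenius-norm form holds under the corresponding normalization. A secondary issue is that the linearization in Step 2 must stay valid up to $s_*$. I would handle this by restricting $s_*\ll\gamma/\alpha$ and treating the result as a leading-order escape-time estimate.
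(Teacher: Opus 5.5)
Your proposal is essentially the paper's proof: both drop the $-2\alpha s^2$ term to get $\tau\dot s = 2\gamma s + 2\rho\alpha s^{3/2}$, solve this Bernoulli equation in $u=\sqrt{s}$ by partial fractions, invert for $t_{\text{GD}}$ and $t_{\text{SAM}}$, and linearize the logarithm. The one difference is in the general-covariance reduction: you compute $\mP$ under $\mW_1=\vw_2\vm^\top$ with the explicit Rayleigh quotient $\alpha=\vm^\top\E[\vz\vz^\top]\vm$, whereas the paper leaves $\alpha$ implicit and instead keeps the dominant gradient term and reuses $\mW_1=\vw_2\vm^\top$ in early training.

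The obstacle you flagged is real, and your fallback of stating the result in terms of $\gamma$ is the right one. Since $\E[y_q\vz]=\texttt{vec}(\mSigma^2)$, we get $\gamma=\|\mSigma^2\|_F$, which equals $\|\mSigma\|_F$ for $\mSigma=\mI$ but not in general. The paper's closing appeal to $\gamma=\|\mSigma\|_F^2$ matches neither this value nor its own displayed formulas.
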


\begin{proof} Recall the following two ODEs from Section \ref{subsec:merged_grad_hess}: 
\begin{enumerate}
    \item \textbf{GD: } $\tau \frac{ds}{dt} = 2s(\gamma - \alpha s)$.
    \item \textbf{SAM: } $\tau \frac{ds}{dt} = 2s(\gamma - \alpha s)  + 2\rho\alpha s^{3/2}$.
\end{enumerate}
Under sufficiently small initializations $s$ and the fact that the weights have not moved much during early training, we drop \emph{the least dominant term} and approximate the above ODEs as: 
\begin{enumerate} 
    \item \textbf{GD: } $\tau \frac{ds}{dt} = 2\gamma s$.
    \item \textbf{SAM: } $\tau \frac{ds}{dt} = 2\gamma s + 2\rho\alpha s^{3/2}$. 
\end{enumerate}

According to~\citep{zhang2025training, atanasov2021neural}'s perspectives, (1) we can also view this as keeping \emph{the most dominant term} in the gradient, which will yield the same ODE approximation via the new gradient: 
\begin{align*}
   & \frac{\partial \gL}{\partial \vw_2} =\nabla_{\vw_2}\gL = -2\mW_1\left(\mathbb{E}\left[y_q\vz^\top \right] - \underbrace{\vw_2^\top\mW_1}_{\text{two small weights}}\mathbb{E}\left[\vz\vz^\top\right]\right)^\top \approx  -2\mW_1\mathbb{E}\left[y_q\vz \right] + \text{small term} \\\implies 
   & \text{the approximation: }\ \ \ \tau\dot{\vw_2} =\mW_1\mathbb{E}\left[y_q\vz \right]. 
\end{align*}
(2) Additionally, despite a general nonwhite covariance, for early training where the loss has not changed steeply, the same equation in (\ref{eq:change_of_variable}) still holds
\[
    \mW_1 = \vw_2\frac{\mathbb{E}[y_q\vz^\top]}{\|\mathbb{E}[y_q\vz]\|_2}  \implies  \tau\dot{\vw_2} =\vw_2\frac{\mathbb{E}[y_q\vz^\top]}{\|\mathbb{E}[y_q\vz]\|_2}\mathbb{E}\left[y_q\vz \right] = \|\mathbb{E}[y_q\vz]\|_2\vw_2 = \gamma\vw_2. 
\]
Similarly, letting $s = \vw_2^\top\vw_2$ recovers exactly the ODE $\tau \frac{ds}{dt} = 2\gamma s$ for GD after dropping the least dominant term. And it follows that the corresponding SAM ODE is $\tau \frac{ds}{dt} = 2\gamma s + 2\rho\alpha s^{3/2}$. 

The approximate ODE for GD, same as in the original paper, is a basic separable equation, which has the following solution: 
\[
    s_{\text{GD}}(t) = s_{\text{GD}}(0)e^{2\gamma\frac{t}{\tau}}. 
\]
 By Lemma \ref{lem:ODE_closed_form_for_SAM_small_inits}, the SAM solution is: 
\[
    s_{\text{SAM}}(t) =\left(\frac{\gamma \sqrt{s_{\text{SAM}}(0)} e^{\gamma\frac{t}{\tau}}}{\gamma + \rho\alpha \sqrt{s_{\text{SAM}}(0)}\left(1 - e^{\gamma\frac{t}{\tau}}\right)}\right)^2. 
\]
Note that $\rho = 0$ recovers the GD solution. The above solutions provide analytical and tractable evolutions for these weights. For the early phase of training, we can then compute the time required to reach some $s(t)$ i.e escape from the saddle. 

Starting from the same small initializations where $s_0 :=s_{\text{GD}}(0) = s_{\text{SAM}}(0)$, we want to reach some larger early point $s_*$. We let $t_{\text{GD}}$, $t_{\text{SAM}}$ denote the time required such that $s_{\text{GD}}(t_{\text{GD}}) = s_*$ and $s_{\text{SAM}}(t_{\text{SAM}}) = s_*$ respectively. 

We can easily solve this by inverting the $s_{\text{GD}}$, $s_{\text{SAM}}$ functions above and have the following: 
\[
    t_{\text{GD}} = \frac{\tau}{\gamma}\ln \left(\sqrt{\frac{s_*}{s_0}}\right), 
\]
\[
    t_{\text{SAM}} = \frac{\tau}{\gamma}\ln \left(\frac{\sqrt{s_*}(\gamma + \rho\alpha \sqrt{s_0})}{\sqrt{s_0}(\gamma + \rho\alpha \sqrt{s_*})}\right). 
\]
Hence, we have: 
\[
    \Delta t = t_{\text{GD}} - t_{\text{SAM}}  = \frac{\tau}{\gamma}\ln \left(\frac{\gamma + \rho\alpha \sqrt{s_*}}{\gamma + \rho\alpha \sqrt{s_0}}\right) > 0 \quad \text{since } \ \ s_* > s_0. 
\]
If $\gamma \gg \rho\alpha\sqrt{s_*}$ (e.g. $\alpha = 1 + (1+d)/N$, $\gamma = \sqrt{d}$ when the input covariance $\mSigma = \mI$ by Equation 50 from \cite{zhang2025training}), we can apply $\ln(1+x) \approx x$ for small $x$ to obtain: 
\begin{align*}
    \Delta t & =\frac{\tau}{\gamma}\left(\ln\left(1 + \frac{\rho\alpha \sqrt{s_*}}{\gamma}\right)  - \ln\left(1 + \frac{\rho\alpha \sqrt{s_0}}{\gamma}\right)\right)   \approx \frac{\tau}{\gamma}\left(\frac{\rho\alpha}{\gamma}\right)(\sqrt{s_*} - \sqrt{s_0}). 
\end{align*}
Hence, SAM allows for a slightly faster convergence in terms of the gradient flow, and $\gamma = \|\mSigma\|_F^2 $ by~\citep{zhang2025training} Equation 53 completes the proof. 
\end{proof}

Notably, since the escape time is inversely proportional to the Frobenius norm of data covariance, increasing the norm via amplifying any data can accelerate convergence for both optimizers and continually reduce $\Delta t$. This is also apparent from Theorem \ref{thm:merged_SAM_ODE}, as increasing $\gamma= \|\mSigma\|_F^2$ will make the first term larger than the second term being subtracted and reduce its effect. In particular, this suggests that modifying data can be a viable path for GD to have SAM-like properties. However, \textbf{in practice, SAM should slow down convergence} in terms of both iterations and training time because the model spends time and steps correcting the adversarial moves. For vision models, \cite{nguyen2024changing} shows that this process enables uniform feature learning and reduces SB for better generalization. This contradiction could be due to that the model is too simple, and SAM often steps in a good direction instead of an adversarial one as intended. 

With this being said, we only provide the discussion of $\texttt{ATTN}_M$ here, as it does not show relevant insights about SB. 

\subsection{Helper Lemmas}

\begin{lemma} \label{lem:ODE_closed_form_for_white_covariance}
Consider the following ordinary differential equation: 
\[
    \tau\frac{ds}{dt} = 2s(\gamma -\alpha s) + 2\rho \alpha s^{3/2} \quad \text{where $\gamma-\alpha s>0$}. 
\]  
Then let $\Delta=\sqrt{\rho^2+\frac{4\gamma}{\alpha}}$, the solution satisfies: 
\[
    \frac{1}{2\gamma}\ln\frac{s(t)}{s(0)} +\frac{1}{\alpha\Delta}\left( \frac{1}{r_-}\ln\frac{\sqrt{s(t)}-r_-}{\sqrt{s(0)}-r_-} -\frac{1}{r_+}\ln\frac{\sqrt{s(t)}-r_+}{\sqrt{s(0)}-r_+}\right) = \frac{t}{\tau}, \ \ r_\pm = \frac{\rho \pm \Delta}{2}.
\]
\end{lemma}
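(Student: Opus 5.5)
The plan is a direct computation by separation of variables, using the substitution $u=\sqrt{s}$. This substitution turns the half-integer power into a polynomial, so the integral reduces to partial fractions.

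First, I would write $s=u^2$ with $u>0$, so $\dot s = 2u\dot u$. The ODE then becomes
\[
\tau\,2u\,\dot u = 2u^2(\gamma-\alpha u^2)+2\rho\alpha u^3 ,
\]
which simplifies to
\[
\tau\dot u = u(\gamma+\rho\alpha u-\alpha u^2) = -\alpha\, u\,(u-r_+)(u-r_-).
\]
Here $r_\pm=\tfrac{\rho\pm\Delta}{2}$ are the roots of $u^2-\rho u-\gamma/\alpha=0$, so $r_+r_-=-\gamma/\alpha$ and $r_+-r_-=\Delta$. The constraint $\gamma-\alpha s>0$, together with the positivity of the right-hand side, keeps $u$ strictly between $0$ and $r_+$. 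This matters because it fixes the signs inside the logarithms; I would check that $\sqrt{s}-r_+$ has a constant sign along the trajectory, so that the ratio in the logarithm is positive. The factor $r_-<0$ causes no trouble.

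Second, I would separate variables to get
\[
\frac{t}{\tau}=\int_{u_0}^{u(t)}\frac{du}{-\alpha u(u-r_+)(u-r_-)},
\]
and decompose the integrand as
\[
\frac{1}{u(u-r_+)(u-r_-)}=\frac{A}{u}+\frac{B}{u-r_+}+\frac{C}{u-r_-},
\]
with $A=1/(r_+r_-)=-\alpha/\gamma$, $B=1/(r_+\Delta)$, and $C=-1/(r_-\Delta)$. Multiplying by $-1/\alpha$, the $u$-term contributes $\tfrac{1}{\gamma}\ln\tfrac{u(t)}{u_0}=\tfrac{1}{2\gamma}\ln\tfrac{s(t)}{s(0)}$. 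The other two terms contribute
\[
\frac{1}{\alpha\Delta}\left(\frac{1}{r_-}\ln\frac{\sqrt{s(t)}-r_-}{\sqrt{s(0)}-r_-}-\frac{1}{r_+}\ln\frac{\sqrt{s(t)}-r_+}{\sqrt{s(0)}-r_+}\right).
\]
Integrating these gives exactly the stated identity.

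The only delicate point is the sign and orientation bookkeeping. This includes the sign of the coefficient $-\alpha$ after factoring, the fact that the logarithm of a ratio of same-sign quantities is well defined, and confirming that the $r_-$ and $r_+$ terms carry the signs as written. I would verify all of this by differentiating the final expression with respect to $u$ and checking that it reproduces $\tau/(u(\gamma+\rho\alpha u-\alpha u^2))$. As a consistency check, I would set $\rho=0$, which must recover the GD logistic solution already computed in the proof of Theorem~\ref{thm:merged_SAM_ODE}.
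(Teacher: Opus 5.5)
Your proposal is correct and follows the paper's proof essentially step for step: the substitution $u=\sqrt{s}$, the factorization $\gamma+\rho\alpha u-\alpha u^2=-\alpha(u-r_+)(u-r_-)$, and the partial fractions integrated from $u_0$ to $u(t)$. After absorbing the factor $-1/\alpha$, your coefficients become $1/\gamma$, $-1/(\alpha r_+\Delta)$ and $1/(\alpha r_-\Delta)$, which match the paper's $A,B,C$. Your explicit check that $0<u<\sqrt{\gamma/\alpha}\le r_+$ keeps the logarithm arguments positive adds a little rigor: the paper writes $\ln|u-r_+|$ and then drops the absolute values without comment.
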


\begin{proof}
Set $u(t)=\sqrt{s(t)}$. Skipping the argument $t$, we have $s=u^2$ and $\dot{s}=2u\dot{u}$. Substituting into the ODE gives: 
\begin{align*}
    \tau(2u\dot{u})&=2u^2(\gamma-\alpha u^2) + 2\rho\alpha u^3 \\
    \dot{u} &= \frac{u}{\tau}\Bigl(\gamma + \rho\alpha u - \alpha u^2\Bigr).
\end{align*}
This is separable:
\begin{equation} \label{eq:separated_SAM_ODE1}
    \frac{du}{u(\gamma+\rho\alpha u - \alpha u^2)} = \frac{dt}{\tau}.
\end{equation}
We perform partial fraction decomposition on the LHS. First, 
\[
    \gamma+\rho\alpha u-\alpha u^2 = -\alpha(u-r_+)(u-r_-), \quad r_\pm=\frac{\rho \pm \sqrt{\rho^2+4\gamma/\alpha}}{2}.
\]
We want to solve for $A$, $B$, $C$ in the following equation: 
\begin{align*}
    \frac{1}{-\alpha u(u-r_+)(u-r_-)}  =\frac{A}{u}+\frac{B}{u-r_+}+\frac{C}{u-r_-}
\end{align*}
\[
    \implies 1=-\alpha\left[A(u-r_+)(u-r_-)+Bu(u-r_-)+Cu(u-r_+)\right]. 
\]
Let $u = 0$. This equation simplifies to: 
\[
    1=-\alpha A(-r_+)(-r_-)\quad\Longrightarrow\quad A=\frac{1}{-\alpha r_+r_-} = \frac{1}{\gamma}.
\]
Similarly, with $u=r_+$ and $u=r_-$ respectively, we have: 
\[
    1=-\alpha B\,r_+(r_+-r_-)\quad\Longrightarrow\quad B=-\frac{1}{\alpha r_+(r_+-r_-)}.
\]
\[
    1=-\alpha Cr_-(r_- - r_+) =\alpha Cr_-(r_+-r_-) \quad\Longrightarrow\quad C=\frac{1}{\alpha r_-(r_+-r_-)}.
\]
With the partial fraction decomposition, we integrate both sides of Equation \ref{eq:separated_SAM_ODE1}: 
\begin{align*}
    \int \frac{du}{u(\gamma+\rho\alpha u - \alpha u^2)} & = A\ln u +  B\ln|u-r_+| + C\ln|u-r_-|\\ 
    & = \frac{1}{\gamma}\ln u -\frac{1}{\alpha r_+(r_+-r_-)}\ln|u-r_+| + \frac{1}{\alpha r_-(r_+-r_-)}\ln|u-r_-| \\ 
    & = \frac{1}{\gamma}\ln u + \frac{1}{\alpha \Delta}\left(\frac{1}{r_-}\ln|u-r_-| - \frac{1}{r_+}\ln|u-r_+|\right) \\ 
    & = \frac{t}{\tau} + C, 
\end{align*}
where $\Delta = r_+ - r_- = \sqrt{\rho^2 + \frac{4\gamma}{\alpha}}$. 
Imposing the initial condition $u(0)$ gives
\[
\frac{1}{\gamma}\ln\!\frac{u(t)}{u_0}
+\frac{1}{\alpha\Delta}\!\left(
\frac{1}{r_-}\ln\!\frac{u(t)-r_-}{u_0-r_-}
-\frac{1}{r_+}\ln\!\frac{u(t)-r_+}{u_0-r_+}
\right)
= \frac{t}{\tau}.
\]
Finally, substituting back $u(t) = \sqrt{s(t)}$ completes the proof.
\end{proof}

\begin{lemma} \label{lem:ODE_closed_form_for_SAM_small_inits}
Consider the following ordinary differential equation: 
\[
    \tau \frac{ds}{dt} = 2\gamma s + 2\rho\alpha s^{3/2}, \quad \text{where $\gamma-\alpha s>0$}. 
\]  
It has the following solution: 
\[
    s(t) =\left(\frac{\gamma \sqrt{s(0)} e^{\gamma\frac{t}{\tau}}}{\gamma + \rho\alpha \sqrt{s(0)}\left(1 - e^{\gamma\frac{t}{\tau}}\right)}\right)^2. 
\]
\end{lemma}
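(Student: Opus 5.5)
The ODE is separable. Substituting $u=\sqrt{s}$ with $\dot s = 2u\dot u$ turns
\[
\tau\dot s = 2\gamma s + 2\rho\alpha s^{3/2}
\]
into a Bernoulli-type equation in $u$, which I will integrate by partial fractions. Inverting the result gives the closed form. The constraint $\gamma-\alpha s>0$ plays no role in the algebra; it only records the regime in which the approximation is used, so I will simply carry it along.

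\textbf{Reducing to an equation in $u$.} Dividing the substituted equation by $2u$ gives
\[
\tau\dot u = \gamma u + \rho\alpha u^2 = u(\gamma+\rho\alpha u).
\]
Separating variables, $\frac{du}{u(\gamma+\rho\alpha u)} = \frac{dt}{\tau}$. The partial fraction decomposition is
\[
\frac{1}{u(\gamma+\rho\alpha u)} = \frac{1}{\gamma}\Big(\frac{1}{u} - \frac{\rho\alpha}{\gamma+\rho\alpha u}\Big).
\]
Integrating from $0$ to $t$ then yields
\[
\frac{1}{\gamma}\ln\frac{u(t)\,(\gamma+\rho\alpha u_0)}{u_0\,(\gamma+\rho\alpha u(t))} = \frac{t}{\tau}.
\]
Here $u_0=\sqrt{s(0)}>0$, and all logarithms have positive arguments because $u>0$ and $\rho,\alpha,\gamma>0$.

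\textbf{Inverting for $u(t)$.} Write $E=e^{\gamma t/\tau}$. Exponentiating gives
\[
u(\gamma+\rho\alpha u_0) = E\,u_0(\gamma+\rho\alpha u).
\]
This is linear in $u$, so collecting terms gives
\[
u\big(\gamma+\rho\alpha u_0-\rho\alpha u_0E\big) = \gamma u_0 E,
\qquad\text{i.e.}\qquad
u(t) = \frac{\gamma u_0E}{\gamma+\rho\alpha u_0(1-E)}.
\]
Squaring gives the stated formula for $s(t)$. As a sanity check, setting $\rho=0$ recovers $s(0)e^{2\gamma t/\tau}$, the GD solution.

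\textbf{Main obstacle: domain of validity.} The only delicate point is where this formula is valid. The denominator vanishes at the finite blow-up time $E = 1+\gamma/(\rho\alpha u_0)$. I will therefore state that the solution holds on the maximal interval where the denominator is positive. On that interval $u$ is positive and increasing, so the separation of variables is justified. This interval contains the early-training window in which $\gamma-\alpha s>0$ is assumed, so the statement holds as written within its intended range.
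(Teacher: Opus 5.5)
Your proposal is correct and follows the paper's proof essentially step for step: the substitution $u=\sqrt{s}$, the same partial fraction decomposition of $1/(u(\gamma+\rho\alpha u))$, integration with the initial condition, and then solving the resulting linear equation for $u(t)$ before squaring. Your version is slightly cleaner than the paper's, whose final display mistakenly sets $u(t)$ equal to the already-squared expression. Your domain-of-validity remark is also a correct addition the paper omits: the formula holds up to the blow-up time $e^{\gamma t/\tau}=1+\gamma/(\rho\alpha\sqrt{s(0)})$, and this time lies beyond the window where $\gamma-\alpha s>0$.
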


\begin{proof}
Set $u(t)=\sqrt{s(t)}$. We have $s=u^2$ and $\dot{s}=2u\dot{u}$. Substituting into the ODE gives: 
\begin{align*}
    \tau(2u\dot{u})&=2\gamma u^2 + 2\rho\alpha u^3  \quad \Longrightarrow \quad \dot{u} = \frac{u}{\tau}\left(\gamma+ \rho\alpha u\right).
\end{align*}
This equation is separable:
\begin{equation} \label{eq:separated_SAM_ODE2}
    \frac{du}{u(\gamma+ \rho\alpha u)} = \frac{dt}{\tau}.
\end{equation}
We perform partial fraction decomposition on the LHS and obtain, 
\[
    \int\frac{1}{u(\gamma+ \rho\alpha u)} du = \int \frac{1}{\gamma}\frac{1}{u} - \frac{\rho\alpha}{\gamma} \frac{1}{\gamma+ \rho\alpha u} du \implies
    \frac{1}{\gamma} \ln u - \frac{1}{\gamma} \ln (\gamma+ \rho\alpha u) = \frac{1}{\gamma} \ln \left(\frac{u}{\gamma+ \rho\alpha u}\right) = \frac{t}{\tau} + C. 
\]
Imposing the initial condition $u(0)$ gives
\[
\ln \left(\frac{u}{\gamma+ \rho\alpha u}\right) -  \ln \left(\frac{u(0)}{\gamma+ \rho\alpha u(0)}\right)= \gamma\frac{t}{\tau} \implies
    u(t) = \left(\frac{\gamma u(0) e^{\gamma\frac{t}{\tau}}}{\gamma + \rho\alpha u(0)\left(1 - e^{\gamma\frac{t}{\tau}}\right)}\right)^2. 
\]
Finally, substituting back $u(t) = \sqrt{s(t)}$ completes the proof.
\end{proof}

    \section{Proofs of Theorems for $\texttt{ATTN}_S$}  
\label{app:proof_for_separate}

We now focus on proving Theorems \ref{thm:SAM_ODE} and \ref{thm:SAM_reduces_SB} in the main text, showing that SAM reduces SB early in training.  

\subsection{Preliminary and Our Hessian Extension}

We first recall and summarize some important properties and formulas from~\citep{zhang2025training} (with slightly different notations), in particular from Section E and Subsections C.1, C.2. Given the model $\texttt{ATTN}_S$, we have the following gradients for the $i$-th head: 
\begin{align}
    \tau \dot{v}_i & = \vk_i^{\top} \mathbb{E}\left[ \vbeta \left(y_q - \hat{y}_q\right)\vx_q^{\top} \right] \vq_i = \vk_i^{\top} \left( \mSigma^{2} - \mathbb{E}\left[ \hat{\mSigma}^{2} \right] 
   \sum_{i'=1}^{H} v_{i'} \vk_{i'} \vq_{i'}^{\top} \mSigma \right) \vq_i, \label{eq:tau_v_GD} \\
    \tau \dot{\vk}_i &= v_i \mathbb{E}\left[ \vbeta \left(y_q - \hat{y}_q\right)\vx_q^{\top} \right] \vq_i =  v_i \left( \mSigma^{2} - \mathbb{E}\left[ \hat{\mSigma}^{2} \right] 
   \sum_{i'=1}^{H} v_{i'} \vk_{i'} \vq_{i'}^{\top} \mSigma \right) \vq_i, \label{eq:tau_k_GD} \\
    \tau \dot{\vq}_i &= v_i \mathbb{E}\left[ \vx_q \left(y_q - \hat{y}_q\right)\vbeta^{\top} \right] \vk_i = v_i \left( \mSigma^{2} - \mSigma \sum_{i'=1}^{H} v_{i'} \vk_{i'} \vq_{i'}^{\top} 
   \mathbb{E} \left[ \hat{\mSigma}^{2} \right] \right) \vk_i \label{eq:tau_q_GD}.
\end{align}
Equivalently, for the squared loss, we have the following gradient: 
\begin{align} 
    \frac{\partial \gL}{\partial v_i} & = \nabla_{v_i}\gL 
    = -2 \vk_i^{\top} \left( \mSigma^{2} - \mathbb{E}\left[ \hat{\mSigma}^{2} \right] 
    \sum_{i'=1}^{H} v_{i'} \vk_{i'} \vq_{i'}^{\top} \mSigma \right) \vq_i, \label{eq:v_gradients}\\
    \frac{\partial \gL}{\partial \vk_i} & = \nabla_{\vk_i}\gL
    = -2 v_i \left( \mSigma^{2} - \mathbb{E}\left[ \hat{\mSigma}^{2} \right]
    \sum_{i'=1}^{H} v_{i'} \vk_{i'} \vq_{i'}^{\top} \mSigma \right) \vq_i, \label{eq:k_gradients} \\
    \frac{\partial \gL}{\partial \vq_i} & = \nabla_{\vq_i}\gL
    = -2 v_i \left( \mSigma^{2} - \mSigma \sum_{i'=1}^{H} v_{i'} \vk_{i'} \vq_{i'}^{\top} 
    \mathbb{E}\left[ \hat{\mSigma}^{2} \right] \right) \vk_i \label{eq:q_gradients}.
\end{align}
Here $\mSigma = \sum_{i=1}^d \lambda_i\ve_i\ve_i^\top$ and $\hat{\mSigma }$ denote the population covariance and sample covariance matrices, respectively. In particular, their Equations 30 and 31 show that: 
\[
    \mathbb{E}\left[ \hat{\mSigma}^2 \right] 
    = \mSigma^2 + \frac{\mSigma + \Tr(\mSigma) \mI}{N}\mSigma = \sum_{i=1}^d a_i \ve_i \ve_i^\top, 
\]
where the eigenvectors are the same as $\mSigma$ and 
\[
a_i = \left(1 + \frac{1}{N}\right)\lambda_i^2 
+ \frac{\Tr(\mSigma)}{N}\lambda_i = \lambda_i^2 \left( 1 + \frac{1 + \Tr(\mSigma)/\lambda_i}{N} \right).
\]

On top of the gradient computations, we perform the following Hessian computation for the $i$-th head weight $\vw_i = \left[v_i \ \ \vk_i^\top \ \ \vq_i^\top\right]^\top \in \mathbb{R}^{2d+1}$  (``with itself''). The Hessian has the $3 \times 3$ block structure
\[
    \mH_i = 
    \begin{bmatrix}
        \mH_{i, vv} & \mH_{i, vk} & \mH_{i, vq} \\ 
        \mH_{i, kv} & \mH_{i, kk} & \mH_{i, kq} \\
        \mH_{i, qv} & \mH_{i, qk} & \mH_{i, qq}
    \end{bmatrix},  \quad \text{where}
\]
\begin{align*}
    \mH_{i, vv} &= \frac{\partial}{\partial v_i}\left(\nabla_{v_i}\gL \right)
    = 2 \vk_i^{\top}  \mathbb{E}\left[ \hat{\mSigma}^{2} \right] \vk_{i} \vq_{i}^{\top} \mSigma  \vq_i, \\
    \mH_{i, kv} &= \mH_{i, vk}^\top = \frac{\partial}{\partial \vk_i}\left(\nabla_{v_i}\gL \right)
    = -2 \mSigma^{2}\vq_i + 2\mathbb{E}\left[ \hat{\mSigma}^{2} \right] 
    \sum_{i'=1}^{H} v_{i'} \vk_{i'} \vq_{i'}^{\top} \mSigma \vq_i  + 2v_i\vq_{i}^{\top} \mSigma \vq_i \mathbb{E}\left[ \hat{\mSigma}^{2} \right] 
    \vk_{i}, \\
    \mH_{i, kk} &= \frac{\partial}{\partial \vk_i^\top}\left(\nabla_{\vk_i}\gL \right) = 2 v_{i}^2 \vq_{i}^{\top} \mSigma \vq_i\mathbb{E}\left[ \hat{\mSigma}^{2} \right], \\
    \mH_{i, kq} &= \mH_{i, qk}^\top = \frac{\partial}{\partial \vq_i^\top}\left(\nabla_{\vk_i}\gL \right)
    = -2  v_i \mSigma^{2} + 2 v_i \mathbb{E}\left[ \hat{\mSigma}^{2} \right]
    \sum_{i'=1}^{H} v_{i'} \vk_{i'} \vq_{i'}^{\top} \mSigma + 2 v_i^2 \mathbb{E}\left[ \hat{\mSigma}^{2} \right] \vk_{i}  \vq_i^\top \mSigma, \\
    \mH_{i, qq} &= \frac{\partial}{\partial \vq_i^\top}\left(\nabla_{\vq_i}\gL \right) = 2 v_i^2 \mSigma \vk_{i} \vk_i^{\top} 
    \mathbb{E}\left[ \hat{\mSigma}^{2} \right] , \\
    \mH_{i, qv} &= \mH_{i, vq}^\top = \frac{\partial}{\partial \vq_i}\left(\nabla_{v_i}\gL \right)
    = -2 \mSigma^{2} \vk_i + 2\mSigma 
    \sum_{i'=1}^{H} v_{i'} \vq_{i'} \vk_{i'}^{\top} \mathbb{E}\left[ \hat{\mSigma}^{2} \right]\vk_i + 2v_{i}\vk_i^\top \mathbb{E}\left[ \hat{\mSigma}^{2} \right] 
    \vk_{i} \mSigma \vq_i. 
\end{align*}

In the above Hessian, we do not need to include other heads, as the heads become active for learning one at a time, and the others do not induce new gradient in SAM.  

\subsection{Learning the First Dominant Eigenvector: Proof of Theorem \ref{thm:SAM_ODE} Part 1}

We permute and index the heads by the order of learning the eigenvectors. At the start of training, the model learns the most prominent eigenvector $\ve_1$, and recall that by the ansatz, the weights are set as: 
\[
    \vk_1 = \vq_1 = v_1\ve_1; \quad \vk_i = \vq_i = \bm{0}, \ v_i = 0 \quad \text{for other heads. }
\]
The head indexed $1$ first becomes active, and the other parallel heads are not affected during the training (zero dynamics). 

We first have the following simplified GD dynamics from Equations \ref{eq:tau_v_GD}, \ref{eq:tau_k_GD}, \ref{eq:tau_q_GD}: 
\begin{align*}
&\mSigma^{2} - \mathbb{E}\left[\hat{\mSigma}^{2} \right] 
   \sum_{i=1}^{H} v_i \vk_i \vq_i^{\top} \mSigma = \mSigma^{2} - \sum_{i=1}^{d} a_i \ve_i \ve_i^{\top} v_1^{3} \ve_1 \ve_1^{\top} \mSigma = \mSigma^{2} - \lambda_1 a_1 \ve_1 \ve_1^{\top} v_1^{3} \\
&\tau \dot{v}_1 = v_1^{2} \ve_1^{\top} \left( \mSigma^{2} - \lambda_1 a_1 \ve_1 \ve_1^{\top} v_1^{3} \right) \ve_1
   = \lambda_1^{2} v_1^{2} - \lambda_1 a_1 v_1^{5}, \quad 
\tau \dot{\vk}_1 = \lambda_1^{2} v_1^{2} \ve_1 - \lambda_1 a_1 v_1^{5} \ve_1, \\
& \tau \dot{\vq}_1 = \lambda_1^{2} v_1^{2} \ve_1 - \lambda_1 a_1 v_1^{5} \ve_1, \quad v_i = 0, \  \vk_i = 0,\  \vq_i = 0,\quad i = 2, \ldots, H .
\end{align*}
This significantly reduces the complexity of the dynamics. Note that we simply multiply by $-2$ to obtain the gradients. Now we want to introduce SAM and extend the computation to the Hessian: 
\begin{align*}
    \mH_{1, vv} &= 2 \vk_1^{\top}  \mathbb{E}\left[ \hat{\mSigma}^{2} \right] \vk_{1} \vq_{1}^{\top} \mSigma  \vq_1 = 2v_1^4\ve_1^\top \mathbb{E}\left[ \hat{\mSigma}^{2} \right] \ve_1 \ve_1^{\top} \mSigma  \ve_1 = 2\lambda_1 a_1 v_1^4 . 
\end{align*}

Similarly, we can compute
\begin{align*}
    \mH_{1, kv} &= \mH_{1, vk}^\top = -2\lambda_1^2v_1 \ve_1 + 2v_1^4\mathbb{E}\left[ \hat{\mSigma}^{2} \right] 
     \ve_1 \ve_1^\top \mSigma \ve_1  + 2v_1^4\ve_1^{\top} \mSigma \ve_1 \mathbb{E}\left[ \hat{\mSigma}^{2} \right] \ve_1 = (-2\lambda_1^2v_1 + 4\lambda_1 a_1 v_1^4)\ve_1, \\
    \mH_{1, kk} & = 2 v_{1}^4 \ve_{1}^{\top} \mSigma \ve_{1}\mathbb{E}\left[ \hat{\mSigma}^{2} \right] = 2\lambda_1 v_1^4\mathbb{E}\left[ \hat{\mSigma}^{2} \right], \\
    \mH_{1, kq} &= -2  v_1 \mSigma^{2} + 2 v_1^4 \mathbb{E}\left[ \hat{\mSigma}^{2} \right] \ve_{1} \ve_{1}^{\top} \mSigma + 2 v_1^4 \mathbb{E}\left[ \hat{\mSigma}^{2} \right] \ve_{1}  \ve_1^\top \mSigma = -2  v_1 \mSigma^{2} + 4\lambda_1 a_1 v_1^4 \ve_1 \ve_1^\top, \\
    \mH_{1, qq} &=  2 v_1^4 \mSigma \ve_1 \ve_1^{\top} 
    \mathbb{E}\left[ \hat{\mSigma}^{2} \right] = 2\lambda_1 a_1 v_1^4 \ve_1 \ve_1^\top, \\
    \mH_{1, qv} &=  -2 v_1 \mSigma^{2} \ve_1 + 2v_1^4 \mSigma 
     \ve_{1} \ve_{1}^{\top} \mathbb{E}\left[ \hat{\mSigma}^{2} \right]\ve_1 + 2v_{1}^4 \ve_1^\top \mathbb{E}\left[ \hat{\mSigma}^{2} \right] 
    \ve_{1} \mSigma \ve_1 = (-2 \lambda_1^2 v_1 + 4\lambda_1 a_1 v_1^4) \ve_1. 
\end{align*}

Recall that the SAM and GD dynamics can be related via the following first-order approximation: 
\begin{equation} \label{eq:SAM_separate_P}
    \tau (\dot{\vw}_1)_{\text{SAM}} = \begin{bmatrix}
         \tau \dot{v}_1 \\  \tau \dot{\vk}_1\\  \tau \dot{\vq}_1
    \end{bmatrix}_{\text{SAM}} = \left(\mI_{2d+2} + \mP \right)\begin{bmatrix}
         \tau \dot{v}_1 \\  \tau \dot{\vk}_1\\  \tau \dot{\vq}_1
    \end{bmatrix}_{\text{GD}}, \quad \mP = \frac{\rho}{\|\nabla_{\vw_1}\gL\|_2}\mH_1. 
\end{equation}
We plug in the relevant quantities: 
\begin{align*}
    \left(\mI_{2d+2} + \mP \right)\begin{bmatrix}
         \tau \dot{v}_1 \\  \tau \dot{\vk}_1\\  \tau \dot{\vq}_1
    \end{bmatrix}_{\text{GD}} = \underbrace{\begin{bmatrix}
         \lambda_1^{2} v_1^{2} - \lambda_1 a_1 v_1^{5} \\  (\lambda_1^{2} v_1^{2} - \lambda_1 a_1 v_1^{5})\ve_1\\  (\lambda_1^{2} v_1^{2} - \lambda_1 a_1 v_1^{5})\ve_1  
    \end{bmatrix}}_{\text{GD dynamics}} + \underbrace{\mP\begin{bmatrix}
         \lambda_1^{2} v_1^{2} - \lambda_1 a_1 v_1^{5} \\  (\lambda_1^{2} v_1^{2} - \lambda_1 a_1 v_1^{5})\ve_1\\  (\lambda_1^{2} v_1^{2} - \lambda_1 a_1 v_1^{5})\ve_1  
    \end{bmatrix}}_{\text{SAM correction $=: \mQ$}}
\end{align*}
It remains to compute the perturbation matrix. First note $\|\nabla_{\vw_1}\gL\|_2 = \sqrt{\|\nabla_{v_1}\gL\|_2 + \|\nabla_{\vk_1}\gL\|_2 + \|\nabla_{\vq_1}\gL\|_2} = 2\sqrt{3}\lvert \lambda_1^2 v_1^2 - \lambda_1 a_1 v_1^5\rvert$ by Equations \ref{eq:v_gradients}, \ref{eq:k_gradients}, \ref{eq:q_gradients}. Furthermore, due to sufficiently small initializations, $\lvert \lambda_1^2 v_1^2 - \lambda_1 a_1 v_1^5\rvert =  \lambda_1^2 v_1^2 - \lambda_1 a_1 v_1^5$ and the SAM correction term, according to \ref{eq:SAM_separate_P}, becomes: 
\begin{align*}
    \mQ & = \frac{\rho}{2\sqrt{3}}\begin{bmatrix}
        2\lambda_1 a_1 v_1^4 & (-2\lambda_1^2v_1 + 4\lambda_1 a_1 v_1^4)\ve_1^\top & (-2 \lambda_1^2 v_1 + 4\lambda_1 a_1 v_1^4) \ve_1^\top \\
        (-2\lambda_1^2v_1 + 4\lambda_1 a_1 v_1^4)\ve_1 & 2\lambda_1 v_1^4\mathbb{E}[ \hat{\mSigma}^{2} ] & -2  v_1 \mSigma^{2} + 4\lambda_1 a_1 v_1^4 \ve_1 \ve_1^\top \\ 
        (-2 \lambda_1^2 v_1 + 4\lambda_1 a_1 v_1^4) \ve_1 & -2  v_1 \mSigma^{2} + 4\lambda_1 a_1 v_1^4 \ve_1 \ve_1^\top & 2\lambda_1 a_1 v_1^4 \ve_1 \ve_1^\top
    \end{bmatrix} \begin{bmatrix}
         1 \\ \ve_1\\  \ve_1  
    \end{bmatrix} \\ 
    & = \frac{\rho}{2\sqrt{3}}\begin{bmatrix}
        2\lambda_1 a_1 v_1^4 + (-2\lambda_1^2v_1 + 4\lambda_1 a_1 v_1^4)+ (-2 \lambda_1^2 v_1 + 4\lambda_1 a_1 v_1^4) \\
        (-2\lambda_1^2v_1 + 4\lambda_1 a_1 v_1^4)\ve_1 + 2\lambda_1 a_1 v_1^4 \ve_1  -2  \lambda_1^2v_1 \ve_1 + 4\lambda_1 a_1 v_1^4 \ve_1  \\ 
        (-2 \lambda_1^2 v_1 + 4\lambda_1 a_1 v_1^4) \ve_1  -2  \lambda_1^2 v_1 \ve_1 + 4\lambda_1 a_1 v_1^4 \ve_1 + 2\lambda_1 a_1 v_1^4 \ve_1 
    \end{bmatrix} \\ 
    & = \frac{\rho}{2\sqrt{3}}\begin{bmatrix}
        10\lambda_1 a_1 v_1^4 - 4\lambda_1^2v_1  \\
        (10\lambda_1 a_1 v_1^4 - 4\lambda_1^2v_1)\ve_1  \\ 
        (10\lambda_1 a_1 v_1^4 - 4\lambda_1^2v_1) \ve_1 
    \end{bmatrix}. 
\end{align*}
Finally, here we drop the subscript and have the following SAM dynamics: 
\begin{align*}
     \tau\dot{v}_1 &= \lambda_1^{2} v_1^{2} - \lambda_1 a_1 v_1^{5} + \frac{\rho}{\sqrt{3}}\left(5\lambda_1 a_1 v_1^4 - 2\lambda_1^2v_1\right), \\
     \tau\dot{\vk}_1 &= \left(\lambda_1^{2} v_1^{2} - \lambda_1 a_1 v_1^{5} + \frac{\rho}{\sqrt{3}}\left(5\lambda_1 a_1 v_1^4 - 2\lambda_1^2v_1\right) \right)\ve_1, \\ 
     \tau\dot{\vq}_1 &= \left(\lambda_1^{2} v_1^{2} - \lambda_1 a_1 v_1^{5} + \frac{\rho}{\sqrt{3}}\left(5\lambda_1 a_1 v_1^4 - 2\lambda_1^2v_1\right) \right)\ve_1. 
\end{align*}
Similar to Section \ref{subsec:escape_time_for_general_covariance}, under small initializations, we can drop the lower order terms and obtain the following ODE's for SAM and GD: 

\textbf{GD Dynamics: }
\begin{align*}
     \tau\dot{v}_1 = \lambda_1^{2} v_1^{2}, \quad \quad \tau\dot{\vk}_1 = \lambda_1^{2} v_1^{2} \ve_1, \quad \quad \tau\dot{\vq}_1 = \lambda_1^{2} v_1^{2} \ve_1. 
\end{align*}
\textbf{SAM Dynamics: }
\begin{align*}
     \tau\dot{v}_1 = \lambda_1^{2} v_1^{2}  - \frac{2\rho}{\sqrt{3}}\lambda_1^2v_1, \quad \quad \tau\dot{\vk}_1 = \left( \lambda_1^{2} v_1^{2}  - \frac{2\rho}{\sqrt{3}}\lambda_1^2v_1 \right)\ve_1, \quad\quad   \tau\dot{\vq}_1 = \left( \lambda_1^{2} v_1^{2}  - \frac{2\rho}{\sqrt{3}}\lambda_1^2v_1\right)\ve_1. 
\end{align*}
We note that we can reduce them to one scalar-variable ODE: 
\begin{enumerate}
    \item \textbf{GD}: \quad $\tau\dot{v}_1 = \lambda_1^{2} v_1^{2}$,
    \item \textbf{SAM}: \quad $\tau\dot{v}_1 = \lambda_1^{2} v_1^{2}  - \frac{2\rho}{\sqrt{3}}\lambda_1^2v_1 $.
\end{enumerate}

\subsection{Saddle-to-Saddle Learning: Proof of Theorem \ref{thm:SAM_ODE} Part 2}

It turns out that when the model is at the $(m+1)$-th plateau for $m \geq 1$, namely the model is learning the $(m+1)$-th feature, the learning dynamics can be captured by the same ODEs. Recall from the ansatz that:
\[
    \vk_i = \vq_i = v_i\ve_i = \left(\lambda_i + \frac{\lambda_i + \Tr(\mSigma)}{N}\right)^{-1/3} \ve_i, \ 1 \leq i \leq m, \quad \vk_{m+1} = \vq_{m+1} = v_{m+1}(t)\ve_{m+1}
\]
\[
    \vk_i = \vq_i =  \bm{0},  v_i = 0, \ m+2 \leq i \leq H \text{ (the other heads)}
\]
We first have the following simplified GD dynamics for $v_{m+1}$ from Equation \ref{eq:tau_v_GD}: 
\begin{align*}
\tau \dot{v}_{m+1} &  = \vk_{m+1}^{\top} \left( \mSigma^{2} - \mathbb{E}\left[ \hat{\mSigma}^{2} \right] 
   \sum_{i'=1}^{H} v_{i'} \vk_{i'} \vq_{i'}^{\top} \mSigma \right) \vq_{m+1} \\ 
   & = \lambda_{m+1}^2v_{m+1}^2 - v_{m+1}^2\ve_{m+1}^\top\sum_{i=1}^{d} a_i \ve_i \ve_i^{\top}\sum_{i'=1}^{H} v_{i'}^3 \ve_{i'} \ve_{i'}^{\top}\ve_{m+1} \\ 
   & = \lambda_{m+1}^2v_{m+1}^2 -\lambda_{m+1} a_{m+1} v_{m+1}^{5}, 
\end{align*}
where the last step follows from orthonormal eigenvectors. This is the same dynamics as $v_1$ (just a different index), and the same applies for $\vk_{m+1}$ and $\vq_{m+1}$. 

Calculating the SAM perturbation $\mP$ further requires the Hessian $\mH_{m+1}$, which follows the same process as computing $\mH_1$. Hence, the perturbation part is also the same up to replacing the index.

\subsection{Learning Entropy: Proof of Theorem \ref{thm:SAM_reduces_SB}} \label{subsec:proof_of_SAM_reduces_SB}

We first refer to Fig. \ref{fig:toy_entropy_example} to provide a toy instantiation of how the entropy of the normalized time sequence captures uniformity in feature learning. Note that after sum-normalization, the discrete time sequence is converted into a probability mass distribution, and more uniform learning (that is more even time spent on the features) naturally becomes ``close" to a uniform distribution. We quantify the notion of closeness via Shannon entropy, for which the uniform distribution is the largest. 

\begin{figure*}[t]
    \centering
    \includegraphics[width=0.9\textwidth]{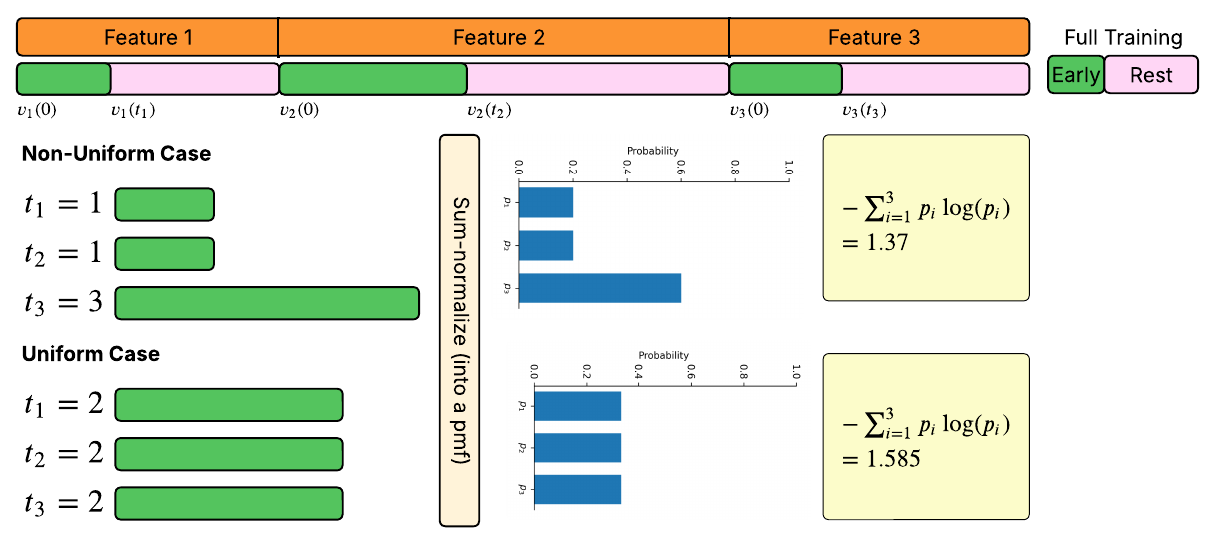}
    \caption{Toy example of Shannon entropy in quantifying uniformity in feature learning. We consider the extreme case of only 3 features. Following the $\texttt{ATTN}_S$ setting, the training can be separated by features. For each feature, there exists an early training phase (in green), and $t_1, t_2, t_3$ mark the times spent in this regime. More uniform learning is achieved when these values are closer to each other (bottom case), and after sum-normalization, this naturally corresponds to a distribution that resembles the uniform distribution (e.g. $(1/3, 1/3, 1/3)$ over three objects) and hence has higher entropy}
    
    \label{fig:toy_entropy_example}
\end{figure*}

Now consider the following two functions: 
\[
    I_1(u) := \int_{\varepsilon}^{u}\frac{1}{v^2} \ dv, \quad I_2(u) := \int_{\varepsilon}^{u}\frac{1}{v^2 - \hat{\rho}v} \ dv, \quad \hat{\rho} = \frac{2\rho}{\sqrt{3}}.
\]
Then letting $u_i = c\lambda_i^{-1/3}$, we have: 
\begin{equation} \label{eq:GD_SAM_time_integral}
     t_i^{\text{GD}} = \frac{\tau}{\lambda_i^2}I_1(u_i) = \tau \left(\frac{u_i}{c}\right)^6I_1(u_i),  \quad t_i^{\text{SAM}} = \frac{\tau}{\lambda_i^2}I_2(u_i) = \tau \left(\frac{u_i}{c}\right)^6I_2(u_i). 
\end{equation}
\textbf{Step 1:} Note that $t_i^{\text{GD}}$ and $t_i^{\text{SAM}}$ are both strictly increasing in $u_i$. This is straightforward as $I_1(u)$ and $I_2(u)$ are both increasing in $u$ (when $\varepsilon > \hat{\rho}$).

\textbf{Step 2:} Show that $t_i^{\text{SAM}}$ is a decreasing ``reweighting" of $t_i^{\text{GD}}$. More precisely, we start by defining the ratio: 
\[
    w(u) := \frac{I_2(u)}{I_1(u)} = \frac{t^{\text{SAM}}(u)}{t^{\text{GD}}(u)} \quad \text{if we treat them as continuous.}
\]
We first note that: 
\[
    w(u) = \frac{\int_{\varepsilon}^{u}\frac{1}{v^2 - \hat{\rho}v} \ dv}{\int_{\varepsilon}^{u}\frac{1}{v^2} \ dv} = \frac{\int_{\varepsilon}^{u}\phi(v)\frac{1}{v^2} \ dv}{\int_{\varepsilon}^{u}\frac{1}{v^2} \ dv}, \quad \text{where} \ \phi(v) = \frac{v}{v- \hat{\rho}}. 
\]
Note that $\phi(v)$ is strictly decreasing on $[\varepsilon, \infty)$ i.e. $\phi^\prime(v) = \frac{-\hat{\rho}}{(v-\hat{\rho})^2} < 0$. This implies that: 
\begin{equation} \label{eq:w_vs_phi_bound}
    w(u) = \frac{I_2(u)}{I_1(u)}> \phi(u)\frac{\int_{\varepsilon}^{u}\frac{1}{v^2} \ dv}{\int_{\varepsilon}^{u}\frac{1}{v^2} \ dv} = \phi(u) = \frac{u}{u - \hat{\rho}} = \frac{I_2^\prime(u)}{I_1^\prime(u)}. 
\end{equation}
We now differentiate and obtain: 
\begin{align*}
    \text{sign}(w^\prime(u)) = \text{sign}\left(\frac{I_1(u)I_2^\prime(u) - I_1^\prime(u)I_2(u)}{I_1(u)^2}\right) & =\text{sign}\left(I_1(u)I_2^\prime(u) - I_1^\prime(u)I_2(u)\right) \\
    & =\text{sign}\left(\frac{I_2^\prime(u) }{I_1^\prime(u) }I_1(u)- I_2(u)\right)  < 0, 
\end{align*}
where the last inequality directly follows from Equation \ref{eq:w_vs_phi_bound}. This shows the claim. 

\textbf{Step 3: } It follows from Lemma \ref{lem:incweight-majorized} when we let: 
\begin{align*}
    &\text{The non-increasing sequence } A = \text{the sequence } \frac{t_i^{\text{GD}}}{\sum_jt_j^{\text{GD}}} \  (i = 1, \dots, M) \ \text{ reversed (decreasing by Step 1)}; \\ 
    & \text{The non-increasing sequence } B = \text{the sequence } \frac{t_i^{\text{SAM}}}{\sum_jt_j^{\text{SAM}}} \  (i = 1, \dots, M) \ \text{ reversed (decreasing by Step 1)}; \\
    & \text{The non-decreasing sequence } w = \text{the sequence } \frac{t_i^{\text{SAM}}}{t_i^{\text{GD}}} \  (i = 1, \dots, M) \ \text{ reversed (increasing by Step 2)}, 
\end{align*}
we have that the normalized GD time sequence majorizes the normalized SAM time sequence. By the strict Schur-concavity of Shannon entropy (Lemma \ref{lem:entropy_is_Schur_concave}), 
    \[
         \ent\left(\left\{t_i^{\text{SAM}}\right\}_{i=1}^M\right) >  \ent\left(\left\{t_i^{\text{GD}}\right\}_{i=1}^M\right).
    \]
This completes the proof of reduced simplicit bias. 

\subsection{Majorization Theory: Helper Lemmas}

In mathematics, majorization theory is a natural branch for studying spread because it gives a principled way to say when one set of numbers is more uneven (more “peaked”) than another—without committing to a single statistic like variance, which can be unreliable at times. We provide the 

\begin{definition} (Restatement of Vector Majorization, Definition A.1, Chapter 1\cite{marshall1979inequalities}) \label{def:majorization} Let $x,y\in\mathbb{R}^n$. Let $x^\downarrow$ and $y^\downarrow$ denote the vectors obtained by sorting the coordinates of $x$ and $y$ in \emph{nonincreasing} order:
\[
x_1^\downarrow \ge x_2^\downarrow \ge \cdots \ge x_n^\downarrow,
\qquad
y_1^\downarrow \ge y_2^\downarrow \ge \cdots \ge y_n^\downarrow.
\]
We say that $x$ \emph{majorizes} $y$, and write $x \succ y$, if
\[
\sum_{i=1}^k x_i^\downarrow \;\ge\; \sum_{i=1}^k y_i^\downarrow
\quad \text{for all } k=1,2,\dots,n-1, \quad \text{ and } \ \ \sum_{i=1}^n x_i^\downarrow \;=\; \sum_{i=1}^n y_i^\downarrow.
\]
    
\end{definition}

\begin{lemma}[Increasing-weight renormalization is majorized]\label{lem:incweight-majorized}
Let \(A=(A_1,\dots,A_n)\) be a probability vector with
\[
A_1 \geq A_2 \geq \dots \geq A_n > 0,\quad \sum_{i=1}^n A_i = 1.
\]
Let \(w=(w_1,\dots,w_n)\) be a nondecreasing sequence of positive weights $
0 < w_1 \le w_2 \le \cdots \le w_n,$
and define the renormalized reweighting \(B=(B_1,\dots,B_n)\) by
\[
B_i := \frac{A_i w_i}{\sum_{j=1}^n A_j w_j}\,,\qquad i=1,\dots,n.
\]
Assume that $B_1 \geq B_2 \geq \dots \geq B_n$. Then \(A\) majorizes \(B\) $(A \succ B)$, i.e.
\[
\sum_{i=1}^k A_i \;\ge\; \sum_{i=1}^k B_i \quad \text{for } k=1,\dots,n-1,
\qquad\text{and}\qquad
\sum_{i=1}^n A_i=\sum_{i=1}^n B_i=1.
\]
\end{lemma}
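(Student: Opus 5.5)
The plan is to compare partial sums directly, using the fact that $B$ puts relatively more mass on later indices, where the weights are larger. Write $S := \sum_{j=1}^n A_j w_j$. For fixed $k \in \{1,\dots,n-1\}$, the claim $\sum_{i\le k} B_i \le \sum_{i \le k} A_i$ is equivalent to
\[
\sum_{i=1}^k A_i w_i \;\le\; \Bigl(\sum_{i=1}^k A_i\Bigr) S .
\]
Set $\alpha_k := \sum_{i\le k} A_i \in (0,1)$, the head average $\bar w_{\le k} := \frac{1}{\alpha_k}\sum_{i\le k} A_i w_i$, and the tail average $\bar w_{>k} := \frac{1}{1-\alpha_k}\sum_{i>k} A_i w_i$. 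Then $S = \alpha_k \bar w_{\le k} + (1-\alpha_k)\bar w_{>k}$, and the inequality becomes $\alpha_k \bar w_{\le k} \le \alpha_k\bigl(\alpha_k \bar w_{\le k} + (1-\alpha_k)\bar w_{>k}\bigr)$. This simplifies to
\[
\alpha_k(1-\alpha_k)\bigl(\bar w_{>k} - \bar w_{\le k}\bigr) \ge 0 .
\]

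The key step is to note that $\bar w_{\le k}$ is a convex combination of $w_1,\dots,w_k$, so it is at most $w_k$. Likewise $\bar w_{>k}$ is a convex combination of $w_{k+1},\dots,w_n$, so it is at least $w_{k+1}$. Monotonicity of $w$ gives $w_{k+1} \ge w_k$, and the inequality follows. The total sums agree trivially, since both $A$ and $B$ sum to $1$ by construction.

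The assumption $B_1 \ge \dots \ge B_n$, together with the ordering of $A$, is used only so that the partial sums in the original index order coincide with the sorted partial sums $\sum_{i\le k} x_i^{\downarrow}$ in the definition of majorization (Definition \ref{def:majorization}). Hence the index-wise inequalities above are exactly $A \succ B$. The positivity of $A_i$ and $w_i$ guarantees $S>0$ and $0<\alpha_k<1$, so the normalizations and averages are well defined.

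There is essentially no obstacle here; the only care needed is to justify the reduction to sorted order and to avoid dividing by zero. If strict majorization is later needed for the strict entropy inequality in Theorem \ref{thm:SAM_reduces_SB}, one notes that $w$ is not constant there (Step 2 gives strictly increasing ratios). In that case $w_{k+1} > w_k$ for some $k$, which makes the corresponding inequality strict, so $B$ is not a permutation of $A$.
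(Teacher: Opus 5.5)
Your proof is correct and is essentially the paper's argument. Both reduce $\sum_{i\le k}B_i\le\sum_{i\le k}A_i$ to the fact that the $A$-weighted prefix average of $w$ is at most the full average $\sum_j A_j w_j$: the paper gets this by showing the running averages $m_k$ are nondecreasing in $k$, while you get it in one step by comparing the head average (at most $w_k$) with the tail average (at least $w_{k+1}$). Your closing remark, that some $w_{k+1}>w_k$ forces a strict partial-sum inequality so $B$ is not a permutation of $A$, fills in a point the paper omits but needs for the strict entropy inequality in Theorem \ref{thm:SAM_reduces_SB}.
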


\begin{proof}
Define the following partial sums and weighted average: 
\[
\alpha_k := \sum_{i=1}^k A_i,
\quad
\beta_k := \sum_{i=1}^k B_i
        = \frac{\sum_{i=1}^k A_i w_i}{\sum_{j=1}^n A_j w_j}, 
\quad m_k := \frac{\sum_{i=1}^k A_i w_i}{\sum_{i=1}^k A_i}
     = \frac{\sum_{i=1}^k A_i w_i}{\alpha_k}. 
\]
Since \((w_i)_{i=1}^n\) is increasing, we have \(m_k \in [w_1,w_k]\), hence
\(m_k \le w_k \le w_{k+1}\), and
\[
m_{k+1}
= \frac{\alpha_k m_k + A_{k+1} w_{k+1}}{\alpha_k + A_{k+1}}
\ge \frac{\alpha_k m_k + A_{k+1} m_k}{\alpha_k + A_{k+1}}
= m_k .
\]
Therefore \((m_k)_{k=1}^n\) is nondecreasing in \(k\). In particular,
\(m_k \le m_n\), where \(m_n = \sum_{j=1}^n A_j w_j\). Consequently,
\[
\beta_k
= \frac{\sum_{i=1}^k A_i w_i}{\sum_{j=1}^n A_j w_j}
= \frac{\alpha_k m_k}{m_n}
\le \frac{\alpha_k m_n}{m_n}
= \alpha_k,
\quad k=1,\dots,n.
\]
Also \(\beta_n = \alpha_n = 1\). These are exactly the majorization inequalities:
\[
\sum_{i=1}^k A_i \;\ge\; \sum_{i=1}^k B_i \quad (k=1,\dots,n-1),
\qquad
\sum_{i=1}^n A_i \;=\; \sum_{i=1}^n B_i \;=\; 1.
\]
Hence \(A \succ B\) by Definition \ref{def:majorization}.
\end{proof}

\begin{definition} (Restatement of Strict Schur-Concavity, Definition A.1, Chapter 3 \cite{marshall1979inequalities})
    Let $D \subseteq \mathbb{R}^n$ be a set that is closed under permutations of coordinates (i.e., if $x\in D$ then $Px \in D$ for any permutation matrix $P$).
A function $f:D\to\mathbb{R}$ is called \emph{strictly Schur-concave} if for all $x,y\in D$,
\[
x \succ y \quad \Longrightarrow \quad f(x) \leq f(y),
\]
where $\succeq$ denotes (vector) majorization, and equality holds only when $x$ is a permutation of $y$.
\end{definition}

\begin{lemma}(Example D.1, Chapter 3 \cite{marshall1979inequalities})\label{lem:entropy_is_Schur_concave} Shannon entropy is strictly Schur-concave. 
\end{lemma}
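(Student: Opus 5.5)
The plan is to use the \emph{Robin Hood} (T-transform) characterization of majorization rather than working directly with the partial-sum inequalities of Definition~\ref{def:majorization}, and to reduce everything to strict concavity of the scalar function $\varphi(t) := -t\log t$ on $[0,1]$, with $\varphi(0)=0$. The entropy is $\ent(p) = \sum_i \varphi(p_i)$, which is symmetric, so the domain (the probability simplex, closed under coordinate permutations) fits the definition of strict Schur-concavity.

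\textbf{Step 1 (reduction to two-coordinate transfers).} I will invoke the classical result of Hardy--Littlewood--P\'olya, in the form of Lemma 2.B.1, Chapter 2 of \cite{marshall1979inequalities}. It says that if $x \succ y$, then $y$ can be reached from $x$ by a finite sequence of at most $n-1$ T-transforms. Each such transform changes only two coordinates: it replaces $(a,b)$, with $a \ge b$, by $(a-\delta, b+\delta)$ for some $0 \le \delta \le a-b$. After this step, it suffices to compare the entropy of two vectors that differ by a single transfer.

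\textbf{Step 2 (one transfer does not decrease entropy).} Fix such a transfer and write $s = a+b$. The sum of the two affected terms of the entropy is $g(\delta) := \varphi(a-\delta)+\varphi(b+\delta)$, and $g$ is concave in $\delta$. It is also symmetric about the midpoint $\delta^\ast = (a-b)/2$, because $g(\delta)=g(a-b-\delta)$. Hence $g$ is nondecreasing on $[0,\delta^\ast]$ and nonincreasing on $[\delta^\ast,a-b]$. This gives $g(\delta)\ge g(0)=g(a-b)$ for every $\delta\in[0,a-b]$. Since $\varphi''(t)=-1/t<0$ on $(0,1]$, $\varphi$ is strictly concave there, and continuity at $0$ extends this to $[0,1]$. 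Consequently the inequality is strict whenever $0<\delta<a-b$.

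\textbf{Step 3 (strictness of the whole chain).} The two endpoint cases are harmless. If $\delta=0$ the transfer does nothing, and if $\delta=a-b$ it simply swaps the two coordinates, so it is a permutation. Discard all such trivial transforms from the chain. If $y$ is not a permutation of $x$, at least one genuine transfer with $0<\delta<a-b$ must remain, because otherwise $y$ would be a permutation of $x$. That transfer strictly increases entropy and every other step does not decrease it, so $\ent(x)<\ent(y)$. If instead $y$ is a permutation of $x$, equality holds by symmetry. Together these give exactly the definition of strict Schur-concavity.

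The main obstacle is Step 1, the existence of the finite T-transform chain; it is the only non-elementary ingredient, and I will cite it from \cite{marshall1979inequalities} rather than reprove it. A possible fallback is to write $y=Dx$ with $D$ doubly stochastic and apply Jensen's inequality row by row. That route gives the weak inequality immediately, but its equality analysis is messier, which is why I prefer the T-transform route for the strictness claim. A last technical point is the presence of zero coordinates, where $\varphi$ is not differentiable; this causes no difficulty because Step 2 uses only concavity and continuity, not derivatives at $0$.
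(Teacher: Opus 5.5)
Your proof is correct, and it follows the standard route behind the citation; the paper gives no argument of its own and only cites Marshall--Olkin, where the example comes from the general fact that $\sum_i g(p_i)$ is strictly Schur-concave whenever $g$ is strictly concave. Your T-transform chain combined with strict concavity of $-t\log t$ is a self-contained proof of exactly that fact for $g(t)=-t\log t$, and you handle strictness correctly by discarding the identity and swap transforms so that at least one genuine transfer with $0<\delta<a-b$ remains.
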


At the end of the proof, we provide a summary of the theoretical results in Table \ref{tab:theory_summary}.

\begin{table*}[!t]
\caption{Summary of Theoretical Results}
\label{tab:theory_summary}
\centering
\begin{tabular}{|l|p{1.2in}|p{0.8in}|p{3.45in}|}
\hline
\textbf{Models} & \multicolumn{2}{c|}{\textbf{Focus}} & \textbf{Theory} \\
\hline
\multirow{4}{*}{\texttt{ATTN}$_M$}
  & \multirow{2}{1.7in}{Training Dynamics} & ODE          & Thm \ref{thm:merged_SAM_ODE}: closed-form solution, convergence boosting term\\ \cline{3-4}
  &                                       & Escape time  & Thm \ref{thm:escape_time}: SAM escapes from saddles slightly faster \\ \cline{2-4}
  & \multirow{2}{1.5in}{Data Upsampling}     & Any Data     & GD becomes similar to SAM if $\|\mSigma\|_F \uparrow$ \\ \cline{3-4}
  &                                       & ``Hard'' Data&  N/A\\ 
\hline
\multirow{4}{*}{\texttt{ATTN}$_S$}
  & \multirow{2}{1.5in}{Training Dynamics}   & ODE          & Thm \ref{thm:SAM_ODE}: SAM has an additional term that changes dynamics\\ \cline{3-4}
  &                                       & Escape time  & Thm \ref{thm:SAM_reduces_SB} SAM reduces simplicity bias \\ \cline{2-4}
  & \multirow{2}{1.5in}{Data Upsampling}     & Any Data     & N/A\\ \cline{3-4}
  &                                       & ``Hard'' Data&  Helps achieve uniform learning speed across features \\
\hline
\end{tabular}
\end{table*}

    \section{Supplemental Experimental Results}
\label{app:aux_results}




Similar to prior findings from \citet{yue2023mammoth}, we observe improvements on SVAMP, DeepMind, and MMLU-Math benchmarks, which contain math tasks different from those in the finetuning data, indicating that targeted upsampling promotes generalization.
More notably, we also observe substantial gains on GSM8K, MATH, and NumGLUE datasets, demonstrating that upsampling not only aids transfer to new tasks formats but also reinforces learning on examples drawn from the original training distribution.
These results establish targeted upsampling as an effective way for mitigating SB and improving generalization. 

\begin{table*}[!ht]
\centering
\caption{
	\textbf{Comparison of upsampling across models and math benchmarks.}
	Each model was finetuned on MathInstruct dataset using AdamW under two settings: without upsampling (FT) and with upsampling (FT+UP), and evaluated in a zero-shot setting using greedy decoding.
	For each model, we report the checkpoint that achieves the highest average accuracy across all datasets, considering the last epoch checkpoints.
	(For comparison, we also report the performance of the pretrained model (PT) without finetuning.)
	The highest performance (within a tolerance of $1.0\%$) is indicated in \textbf{bold}.
	For upsampling, training examples were clustered using loss trajectories from Pythia-70M.
	Datasets include GSM8K, MATH, NumGLUE, SVAMP, DeepMind, and MMLU-Math.
}
\label{tab:adamw-math-upsampling}
\resizebox{1\textwidth}{!}{
\renewcommand{\arraystretch}{1}
\begin{tabular}{lc|ccc|ccc|ccc|ccc}
\toprule
  &  & \multicolumn{3}{c|}{Qwen3-0.6B-Base} & \multicolumn{3}{c|}{Llama3.2-1B} & \multicolumn{3}{c|}{Gemma3-1B-PT} & \multicolumn{3}{c}{Phi2-2.7B} \tabularnewline
\cmidrule(lr){3-5} \cmidrule(lr){6-8} \cmidrule(lr){9-11} \cmidrule(lr){12-14}
Dataset & \#Inst. & PT & FT & FT+UP & PT & FT & FT+UP & PT & FT & FT+UP & PT & FT & FT+UP \tabularnewline
\midrule
GSM8K & $1,319$ & $6.6$ & $\mathbf{59.0}$ & $\mathbf{59.6}$ & $3.0$ & $22.7$ & $\mathbf{25.6}$ & $1.9$ & $8.3$ & $\mathbf{11.1}$ & $56.2$ & $\mathbf{72.3}$ & $\mathbf{72.5}$ \tabularnewline
MATH & $5,000$ & $5.9$ & $36.1$ & $\mathbf{37.9}$ & $3.6$ & $14.6$ & $\mathbf{17.7}$ & $2.3$ & $6.7$ & $\mathbf{9.0}$ & $16.8$ & $33.4$ & $\mathbf{35.0}$ \tabularnewline
NumGLUE & $1,042$ & $9.5$ & $62.5$ & $\mathbf{66.3}$ & $16.4$ & $31.5$ & $\mathbf{36.1}$ & $7.0$ & $24.6$ & $\mathbf{29.8}$ & $34.7$ & $60.7$ & $\mathbf{65.7}$ \tabularnewline
SVAMP & $1,000$ & $6.8$ & $\mathbf{75.6}$ & $\mathbf{75.6}$ & $8.5$ & $\mathbf{41.7}$ & $40.2$ & $3.3$ & $23.8$ & $\mathbf{28.6}$ & $67.8$ & $\mathbf{77.7}$ & $\mathbf{78.5}$ \tabularnewline
DeepMind & $1,000$ & $7.1$ & $\mathbf{65.4}$ & $\mathbf{64.7}$ & $6.0$ & $31.4$ & $\mathbf{34.5}$ & $5.0$ & $20.1$ & $\mathbf{22.8}$ & $33.9$ & $\mathbf{60.2}$ & $58.0$ \tabularnewline
MMLU-Math & $974$ & $24.7$ & $28.2$ & $\mathbf{30.2}$ & $16.2$ & $\mathbf{27.2}$ & $25.8$ & $12.0$ & $20.2$ & $\mathbf{21.6}$ & $12.4$ & $35.0$ & $\mathbf{38.2}$ \tabularnewline
\midrule
Avg. & $1,722$ & $10.1$ & $54.5$ & $\mathbf{55.7}$ & $9.0$ & $28.2$ & $\mathbf{30.0}$ & $5.3$ & $17.3$ & $\mathbf{20.5}$ & $37.0$ & $56.5$ & $\mathbf{58.0}$ \tabularnewline
\bottomrule
\end{tabular}}
\end{table*}

\begin{table*}[!ht]
\caption{\textbf{Upsampling accuracy across optimizers and reference models.} Accuracy (\%) of Phi-2 models trained with AdamW and upsampling-based variants. Reported values show mean accuracy with standard deviation across runs. All upsampling methods use Phi-2 as the base model.}
\label{tab:adamw-upsampling-variants}
\resizebox{1\textwidth}{!}{
\renewcommand{\arraystretch}{1}
\begin{tabular}{l|cccccc}
\toprule
Dataset & Pretrained & AdamW & Upsample Phi Ref & Upsample Pythia Ref & Muon & Upsample Muon \tabularnewline
\midrule
GSM8K & $56.2 \pm 0$ & $72.3 \pm 0$ & $72.6 \pm 1.6$ & $72.7 \pm 1.1$ & $72.6 \pm 0$ & $\mathbf{73.4 \pm 1.1}$ \tabularnewline
MATH & $16.8 \pm 0$ & $33.4 \pm 0$ & $\mathbf{35.1 \pm 0.4}$ & $34.9 \pm 0.1$ & $34.3 \pm 0$ & $\mathbf{35.1 \pm 0.6}$ \tabularnewline
NumGLUE & $34.7 \pm 0$ & $60.7 \pm 0$ & $65.7 \pm 1.3$ & $64.7 \pm 0.2$ & $64.9 \pm 0$ & $\mathbf{66.0 \pm 0.4}$ \tabularnewline
SVAMP & $67.8 \pm 0$ & $77.7 \pm 0$ & $\mathbf{78.6 \pm 0.5}$ & $77.3 \pm 0.9$ & $77.9 \pm 0$ & $77.7 \pm 0.3$ \tabularnewline
DeepMind & $33.9 \pm 0$ & $\mathbf{60.2 \pm 0}$ & $58.1 \pm 0.1$ & $57.9 \pm 0.1$ & $57.5 \pm 0$ & $59.9 \pm 0.1$ \tabularnewline
MMLU-Math & $12.4 \pm 0$ & $35.0 \pm 0.4$ & $\mathbf{38.3 \pm 0.4}$ & $36.6 \pm 0.4$ & $36.1 \pm 0$ & $35.8 \pm 0.4$ \tabularnewline
\midrule
Avg. & $37.0$ & $56.5$ & $\mathbf{58.0}$ & $57.3$ & $57.2$ & $\mathbf{58.0}$ \tabularnewline
\bottomrule
\end{tabular}}
\end{table*}




\begin{table*}[!ht]
\centering
\caption{
	\textbf{Upsampling ablations for Qwen3-0.6B-Base and Llama3.2-1B across math benchmarks.}
	Each model was finetuned under three settings: random upsampling, upsampling \textit{easy} examples, and upsampling hard examples (as described in the paper).
	We ensure that the total number of upsampled examples is identical across ablation experiments; for the easy-upsampling ablation, a random subset of easy examples was selected.
	(For comparison, we also include the performance of the finetuned model (FT) without upsampling.)
	The highest performance \textit{among the ablations} (within a tolerance of $1.0\%$) is indicated in \textbf{bold}; accuracies strictly exceeding the FT baseline are additionally \underline{underlined}.
	All other settings (e.g., clustering model, evaluation setup) are identical to those in Table~\ref{tab:adamw-math-upsampling}.
}
\label{tab:math-upsampling-ablations}
\resizebox{0.64\textwidth}{!}{
\renewcommand{\arraystretch}{1}
\begin{tabular}{l|c|ccc|c|ccc}
\toprule
 & \multicolumn{4}{c|}{Qwen3-0.6B-Base} & \multicolumn{4}{c}{Llama3.2-1B} \tabularnewline
\cmidrule(lr){2-5} \cmidrule(lr){6-9}
Dataset & FT & Rand. & Easy & Hard & FT & Rand. & Easy & Hard \tabularnewline
\midrule
GSM8K & $59.0$ & $56.2$ & $55.0$ & \underline{$\mathbf{59.6}$} & $22.7$ & $22.4$ & $20.4$ & \underline{$\mathbf{25.6}$} \tabularnewline
MATH & $36.1$ & \underline{$36.6$} & \underline{$36.5$} & \underline{$\mathbf{37.9}$} & $14.6$ & \underline{$16.1$} & \underline{$14.9$} & \underline{$\mathbf{17.7}$} \tabularnewline
NumGLUE & $62.5$ & \underline{$64.0$} & $61.4$ & \underline{$\mathbf{66.3}$} & $31.5$ & \underline{$33.5$} & $29.5$ & \underline{$\mathbf{36.1}$} \tabularnewline
SVAMP & $75.6$ & $\mathbf{74.7}$ & $\mathbf{75.3}$ & $\mathbf{75.6}$ & $41.7$ & $\mathbf{40.7}$ & $\mathbf{40.2}$ & $\mathbf{40.2}$ \tabularnewline
DeepMind & $65.4$ & $\mathbf{65.2}$ & $62.4$ & $\mathbf{64.7}$ & $31.4$ & $31.2$ & $30.4$ & \underline{$\mathbf{34.5}$} \tabularnewline
MMLU-Math & $28.2$ & \underline{$28.6$} & $27.1$ & \underline{$\mathbf{30.2}$} & $27.2$ & $\mathbf{25.7}$ & $24.2$ & $\mathbf{25.8}$ \tabularnewline
\midrule
Avg. & $54.5$ & $54.2$ & $53.0$ & \underline{$\mathbf{55.7}$} & $28.2$ & \underline{$28.3$} & $26.6$ & \underline{$\mathbf{30.0}$} \tabularnewline
\bottomrule
\end{tabular}}
\end{table*}




\begin{table*}[!ht]
\centering
\caption{
	\textbf{Upsampling via variational problem synthesis for Qwen3-0.6B-Base, Llama3.2-1B, and Phi2-2.7B across math benchmarks.}
	Depending on our compute constraints, we finetuned each model under the following settings: upsampling hard examples (as described in the paper), upsampling using synthesized examples, and/or a combination of both.
	(For comparison, we also include the performance of the finetuned model (FT) without upsampling.)
	The highest performance \textit{among the upsampling variants} (within a tolerance of $1.0\%$) is indicated in \textbf{bold}; accuracies strictly exceeding the FT baseline are additionally \underline{underlined}.
	All other settings (e.g., clustering model, evaluation setup) are identical to those in Table~\ref{tab:adamw-math-upsampling}.
}
\label{tab:math-upsampling-synthesis}
\resizebox{0.88\textwidth}{!}{
\renewcommand{\arraystretch}{1}
\begin{tabular}{l|c|ccc|c|ccc|c|cc}
\toprule
 & \multicolumn{4}{c|}{Qwen3-0.6B-Base} & \multicolumn{4}{c|}{Llama3.2-1B} & \multicolumn{3}{c}{Phi2-2.7B} \tabularnewline
\cmidrule(lr){2-5} \cmidrule(lr){6-9} \cmidrule(lr){10-12}
Dataset & FT & Upspl. & Synth. & Both & FT & Upspl. & Synth. & Both & FT & Upspl. & Synth. \tabularnewline
\midrule
GSM8K & $59.0$ & \underline{$\mathbf{59.6}$} & $\mathbf{58.6}$ & \underline{$\mathbf{59.4}$} & $22.7$ & \underline{$25.6$} & \underline{$23.3$} & \underline{$\mathbf{27.0}$} & $72.3$ & \underline{$\mathbf{72.5}$} & $\mathbf{72.0}$ \tabularnewline
MATH & $36.1$ & \underline{$\mathbf{37.9}$} & \underline{$\mathbf{37.6}$} & \underline{$\mathbf{37.9}$} & $14.6$ & \underline{$\mathbf{17.7}$} & \underline{$\mathbf{17.0}$} & \underline{$\mathbf{17.8}$} & $33.4$ & \underline{$\mathbf{35.0}$} & \underline{$33.6$} \tabularnewline
NumGLUE & $62.5$ & \underline{$\mathbf{66.3}$} & \underline{$63.1$} & \underline{$\mathbf{65.7}$} & $31.5$ & \underline{$\mathbf{36.1}$} & \underline{$32.8$} & \underline{$34.6$} & $60.7$ & \underline{$\mathbf{65.7}$} & \underline{$61.4$} \tabularnewline
SVAMP & $75.6$ & $\mathbf{75.6}$ & \underline{$\mathbf{75.9}$} & $\mathbf{75.6}$ & $41.7$ & $40.2$ & \underline{$42.6$} & \underline{$\mathbf{46.2}$} & $77.7$ & \underline{$\mathbf{78.5}$} & \underline{$\mathbf{79.5}$} \tabularnewline
DeepMind & $65.4$ & $\mathbf{64.7}$ & $\mathbf{65.0}$ & $\mathbf{64.3}$ & $31.4$ & \underline{$34.5$} & \underline{$35.2$} & \underline{$\mathbf{39.1}$} & $60.2$ & $\mathbf{58.0}$ & $\mathbf{57.7}$ \tabularnewline
MMLU-Math & $28.2$ & \underline{$\mathbf{30.2}$} & $22.6$ & $28.0$ & $27.2$ & $\mathbf{25.8}$ & $23.9$ & $\mathbf{26.5}$ & $35.0$ & \underline{$\mathbf{38.2}$} & $32.9$ \tabularnewline
\midrule
Avg. & $54.5$ & \underline{$\mathbf{55.7}$} & $53.8$ & \underline{$\mathbf{55.2}$} & $28.2$ & \underline{$30.0$} & \underline{$29.1$} & \underline{$\mathbf{31.9}$} & $56.5$ & \underline{$\mathbf{58.0}$} & $56.2$ \tabularnewline
\bottomrule
\end{tabular}}
\end{table*}




\begin{table*}[!ht]
\centering
\caption{
	\textbf{Loss clustering ablations for Qwen3-0.6B-Base, Llama3.2-1B, and Phi2-2.7B across math benchmarks.}
	Each model was finetuned under two settings: upsampling using loss trajectory across multiple proxy model checkpoints (as described in the paper), and upsampling using loss \textit{at} the final proxy model checkpoint.
	(For comparison, we also include the performance of the finetuned model (FT) without upsampling.)
	The highest performance \textit{among the ablations} (within a tolerance of $1.0\%$) is indicated in \textbf{bold}; accuracies strictly exceeding the FT baseline are additionally \underline{underlined}.
	All other settings (e.g., clustering model, evaluation setup) are identical to those in Table~\ref{tab:adamw-math-upsampling}.
}
\label{tab:math-ckpt-vs-traj}
\resizebox{0.7\textwidth}{!}{
\renewcommand{\arraystretch}{1}
\begin{tabular}{l|c|cc|c|cc|c|cc}
\toprule
  & \multicolumn{3}{c|}{Qwen3-0.6B-Base} & \multicolumn{3}{c|}{Llama3.2-1B} & \multicolumn{3}{c}{Phi2-2.7B} \tabularnewline
\cmidrule(lr){2-4} \cmidrule(lr){5-7} \cmidrule(lr){8-10}
Dataset & FT & Traj. & Ckpt. & FT & Traj. & Ckpt. & FT & Traj. & Ckpt. \tabularnewline
\midrule
GSM8K & $59.0$ & \underline{$\mathbf{59.6}$} & \underline{$\mathbf{59.4}$} & $22.7$ & \underline{$\mathbf{25.6}$} & \underline{$\mathbf{25.6}$} & $72.3$ & \underline{$\mathbf{72.5}$} & \underline{$\mathbf{72.6}$} \tabularnewline
MATH & $36.1$ & \underline{$\mathbf{37.9}$} & \underline{$\mathbf{37.1}$} & $14.6$ & \underline{$\mathbf{17.7}$} & \underline{$\mathbf{17.4}$} & $33.4$ & \underline{$\mathbf{35.0}$} & \underline{$\mathbf{35.0}$} \tabularnewline
NumGLUE & $62.5$ & \underline{$\mathbf{66.3}$} & \underline{$\mathbf{65.4}$} & $31.5$ & \underline{$\mathbf{36.1}$} & \underline{$33.5$} & $60.7$ & \underline{$\mathbf{65.7}$} & \underline{$\mathbf{65.5}$} \tabularnewline
SVAMP & $75.6$ & $\mathbf{75.6}$ & \underline{$\mathbf{76.6}$} & $41.7$ & $40.2$ & $\mathbf{41.3}$ & $77.7$ & \underline{$\mathbf{78.5}$} & $75.8$ \tabularnewline
DeepMind & $65.4$ & $\mathbf{64.7}$ & \underline{$\mathbf{65.5}$} & $31.4$ & \underline{$\mathbf{34.5}$} & \underline{$\mathbf{34.9}$} & $60.2$ & $\mathbf{58.0}$ & $56.2$ \tabularnewline
MMLU-Math & $28.2$ & \underline{$30.2$} & \underline{$\mathbf{31.9}$} & $27.2$ & $25.8$ & \underline{$\mathbf{27.8}$} & $35.0$ & \underline{$\mathbf{38.2}$} & \underline{$36.1$} \tabularnewline
\midrule
Avg. & $54.5$ & \underline{$\mathbf{55.7}$} & \underline{$\mathbf{56.0}$} & $28.2$ & \underline{$\mathbf{30.0}$} & \underline{$\mathbf{30.1}$} & $56.5$ & \underline{$\mathbf{58.0}$} & \underline{$56.9$} \tabularnewline
\bottomrule
\end{tabular}}
\end{table*}

    \section{Variational Problem Synthesis}
\label{app:var_synth}

\begin{tcolorbox}[
    breakable,
    title=Prompt used for variational problem synthesis,
]
\begin{verbatim}
You are creating a math problem with this EXACT target solution:

<solution>
{soln}
</solution>

STEP 1 - Create Problem:
Write a problem that should lead to this solution. Match the type:
- Numbers → "Calculate", "Find", "Determine"
- Equations → "Simplify", "Solve for", "Express"  
- Code → Create a realistic scenario requiring a program (do not say 
  "write code for this solution")
- Proofs → "Prove that", "Show that"

Include all necessary information. Use high school/college level context.

STEP 2 - Verify Your Problem:
Solve the problem you created step-by-step. Show your work.

STEP 3 - Check Match:
Does your solution match the target? Compare carefully:
- Same numbers/values?
- Same format?
- Same type?

If NO MATCH → Revise your problem in Step 1 and repeat Steps 2-3.
If MATCH → Proceed to output.

FINAL OUTPUT (only after verification):

<question>
[Your verified problem - do NOT include the solution]
</question>

<verification>
[Brief confirmation: "Verified - solving gives {soln}"]
</verification>
\end{verbatim}
\end{tcolorbox}

\end{appendices}


\end{document}